\newtheorem{theorem}{Theorem}[section]
\newtheorem{lemma}[theorem]{Lemma}
\newtheorem{definition}[theorem]{Definition}
\newtheorem{remark}{Remark}
\newtheorem{example}{Example}
\newtheorem{assumption}{Assumption}
\newcommand{\projectionMD}[1]{\Pi_{\MD}\brackets{#1}}
\newcommand{\MDK}{\MD^K}
\newcommand{\limn}{\lim_{n \to \infty}}
\newcommand{\problemA}{ {\bf A} }
\newcommand{\problemB}{ {\bf B} }
\newcommand{\problemCK}{ {\bf C(K)} }
\newcommand{\problemDKn}{ {\bf D(K,n)} }
\newcommand{\bitm}{\begin{itemize}}
\newcommand{\eitm}{\end{itemize}}
\newcommand{\benum}{\begin{enumerate}}
\newcommand{\eenum}{\end{enumerate}}
\newcommand{\kldist}[2]{D\! \left( #1\|#2 \right)}
\newcommand{\reals}{\mathbb{R}}
\newcommand{\parenth}[1] {\left(#1\right)}
\newcommand{\braces}[1] {\left\{#1\right\}}
\newcommand{\brackets}[1] {\left[#1\right]}
\newcommand{\abs}[1] {\left|#1\right|}
\newcommand{\norm}[1] {\left\|{#1}\right\|}
\newcommand{\beqa}{\begin{eqnarray}}
\newcommand{\eeqa}{\end{eqnarray}}
\newcommand{\beqas}{\begin{eqnarray*}}
\newcommand{\eeqas}{\end{eqnarray*}}
\newcommand{\prob}[1] {\mathbb{P}\parenth{#1}}
\newcommand{\E}{\mathbb{E}}
\def\argmin{\mathop{\arg\,\!\min}\limits}%
\def\argmax{\mathop{\arg\,\!\max}\limits}%
\newcommand{\innerprod}[1] {\langle#1\rangle}
\newcommand{\indicatorvbl}[1] {1_{\braces{#1}}}
\newcommand{\alphabet}[1] { {\mathsf #1}}
\newcommand{\probSimplex}[1]{ \alphabet{P}\parenth{\alphabet{#1}}}
\newcommand{\cX}{\alphabet{X}}
\newcommand{\cY}{\alphabet{Y}}
\newcommand{\tS}{\tilde{S}}
\newcommand{\diffeomorphisms}{\mathcal{D}}
\newcommand{\pushforward}[3]{ {#1 {\#} #2 = #3} }
\newcommand{\obj}{V}
\newcommand{\jac}{J}
\newcommand{\absdet}[2]{\abs{\det{\parenth{\jac_{#1}(#2)}}}}
\newcommand{\absdetSx}{\absdet{S}{x}}
\newcommand{\logdetJSx}{\log \det \parenth{J_{S}(x)}}
\newcommand{\basisfn}[1]{\phi^{(#1)}}
\renewcommand{\P}{\mathbb{P}}
\newcommand{\WKn}{{W^*_{K,n}}}
\newcommand{\tP}{\tilde{P}}
\newcommand{\tp}{\tilde{p}}
\newcommand{\MD}{\mathcal{D}_+}
\renewcommand{\P}{\mathbb{P}}
\newcommand{\SK}{S^*_K}
\newcommand{\Sinf}{S_{\infty}}
\newcommand{\SKn}{S^*_{K,n}}
\title{Tractable Fully Bayesian Inference  via Convex Optimization and Optimal Transport Theory}
\author{Sanggyun Kim, Diego Mesa, Rui Ma, and Todd P. Coleman
\thanks{S. Kim, D. Mesa and T. P. Coleman are with the Department
of Bioengineering, University of California San Diego, La Jolla,
CA, 92093 USA e-mail: s2kim@ucsd.edu, damesa@ucsd.edu,
tpcoleman@ucsd.edu.}
\thanks{R. Ma is with Dexcom Inc., San Diego, CA, 92121 USA e-mail: rma@dexcom.com.}}
\begin{document}

\maketitle

\pagestyle{plain}

\begin{abstract}

We consider the problem of transforming samples from one
continuous source distribution into samples from another target
distribution. We demonstrate with optimal transport theory that
when the source distribution can be easily sampled from and the
target distribution is log-concave, this can be tractably solved
with convex optimization. We show that a special case of this,
when the source is the prior and the target is the posterior, is
Bayesian inference. Here, we can tractably calculate the
normalization constant and draw posterior \textit{i.i.d}. samples.
Remarkably, our Bayesian tractability criterion is simply log
concavity of the prior and  likelihood: the same criterion for
tractable calculation of the maximum a posteriori point estimate.
With simulated data, we demonstrate how we can attain the Bayes
risk in simulations. With physiologic data, we demonstrate
improvements over point estimation in intensive care unit outcome
prediction and electroencephalography-based sleep staging.
\end{abstract}

\newcommand{\itt}[1]{\textit{#1}}
\newcommand{\bt}[1]{\textbf{#1}}

\section{Introduction}
\label{sec:intro}

Reasoning about data in the presence of noise or incomplete
knowledge is fundamental to fields as diverse as science,
engineering, medicine, and finance. One natural and principled way
to reason with uncertainty is Bayesian inference, where a prior
distribution $P_X$ about a random variable $X$ is combined with a
likelihood model $P_{Y|X=x}$ and a measurement $y$ to obtain an
\itt{a posteriori} distribution, specified by Bayes' rule: \beqa
p(x|y) = \frac{p(x)p(y|x)}{\beta_y}, \qquad \quad \beta_y
\triangleq \int_u p(y|u)p(u) du.  \label{eqn:BayesRule} \eeqa The
posterior completely represents our knowledge about $X$.

In many machine learning architectures, a single point estimate on
$X$ is obtained, such as the maximum a posteriori (MAP) point
estimate given by \beqa
\hat{x}_{MAP} &=& \argmax_x \;\;p(x|y)  \nonumber \\
&=& \argmax_x \;\;\log p(x) + \log p(y|x) \label{eqn:MAPestimate}.
\eeqa Note that the MAP estimate does not require the calculation
of $\beta_y$; moreover, \eqref{eqn:MAPestimate} is tractable and
can be solved with general purpose convex optimization solvers
when the prior and likelihood are log-concave. Such distributions
are very natural, used across many different domains
\cite{bagnoli2005log}, and include most commonly used models
(exponential family priors, generalized linear model likelihoods,
etc).

However, as shown in Fig.~\ref{fig:conceptual-maxapos}, the MAP
estimate can be ambiguous: the same MAP estimate can pertain to
one posterior with less variance (less information gain) than
another.
Decision-making using only a point estimate, without taking into
account variability, can have adverse effects in many different
situations, including system design, fault tolerance, and
classification \cite{walker2003defining}.

In order to perform optimal decision-making, one must calculate
conditional expectations of interest for minimizing expected loss
and attaining the Bayes risk \cite{degroot1970optimal}: \beqa
 \E[l(a,X)|Y=y] \underbrace{\simeq}_{Z_i \;\textit{i.i.d.}~\sim P_{X|Y=y}} \frac{1}{n} \sum_{i=1}^n l(a,Z_i). \label{eqn:conditionalExpectation}
 \eeqa
Also, $\beta_y$ is used to calculate information measures
involving log likelihood ratios, such as the posterior information
gain from measuring $y$ \cite{raginsky2009mutual}
\begin{align}
 G(P_{X|Y=y},P_X,) &= \kldist{P_{X|Y=y}}{P_X} \nonumber \\
  &=  -\log \beta_y  + \E \brackets{\;\log p(y|X)\;\big|Y=y}\label{eqn:posteriorInformationGain}
 \end{align}
 for sequential experiment design \cite{degroot1962uncertainty}, or the mutual information $\int_{y \in \cY} G(P_X,P_{X|Y=y}) P_Y(y)dy$ \cite{cover2006elements}.  Conditional information measures such as conditioanl mutual information \cite{koller2009probabilistic} and directed information \cite{quinn2015directedInformationGraphs} also build upon posterior information gains and are relevant for building graphical models to succinctly represent conditional indepencences in data.   Outside of specialized, problem-specific Monte Carlo methods, uncertainty quantification
can be tractably and accurately performed when $\dim{\cX}$ is
small
 (e.g. credibility intervals) or $\dim{\cY}$ is large
 (e.g. Laplace's method using Gaussian approximations \cite{kass1988asymptotics,geisser1990validity}).
 Here, we will develop a tractable, general-purpose framework
 for uncertainty quantification in Bayesian inference within the context of log-concavity.

\begin{figure}[t]
\centering
\includegraphics[width=8cm]{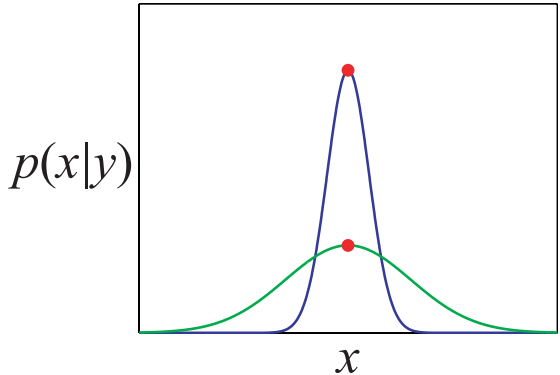}
\caption{Two posterior distributions with the same MAP estimate
but different variances. Decision-making using point estimates but
without uncertainty quantification can significantly reduce
performance as compared to using the full posterior.}
\label{fig:conceptual-maxapos}
\end{figure}

\subsection{Related work}
\label{subsec:relatedwork} The full exploitation of Bayesian
inference has been developed in a wide range of problems.

Hierarchical Bayesian modeling incorporates priors on the
hyper-parameters of the prior distribution to perform high-order
statistical modeling and inference \cite{good1980some,
goldstein2011multilevel, tzikas2008variational}. This  improves
the prediction of a Bayesian model that often depends on the prior
distributions. When $X$ is a random process obeying a state-space
model (e.g. Markov chain), particle filtering and sequential
importance sampling incorporate the  dynamics of $X$
\cite{djuric2003particle, arulampalam2002tutorial} to sequentially
update posteriors. These sequential Monte Carlo methods
recursively compute the relevant probability distributions using
ensembles of ``particles'' and their weights. Nonparametric
Bayesian methods, attracting increasing interest,
\cite{walker1999bayesian, neal2000markov, muller2004nonparametric,
teh2006hierarchical}, make fewer assumptions on the distributions
of interest and allow their parametric complexity to increase with
the amount of data acquired.

For the remainder of this manuscript, we will consider the
canonical Bayesian inference problem in \eqref{eqn:BayesRule}
where $X \subset \reals^d$, $P_X(x)$ has a density $p(x)$ with
respect to the Lebesgue measure, and the likelihood model is
specified in density form as $p(y|x)$.

For a given likelihood $p(y|x)$, in order to obviate the
integration in calculating $\beta_y$ in \eqref{eqn:BayesRule},
conjugate priors allow for a closed-form expression for the
posterior. However,  these are very limiting cases and the
conjugate prior is often selected only for computational purposes,
even if it does not actually represent prior knowledge about $X$.

Markov chain Monte Carlo (MCMC) methods \cite{robert2004monte,
andrieu2003introduction, hastings1970monte, geman1984stochastic,
Liu2008} have enabled widespread attempts at fully Bayesian
inference by representing a probability distribution as a set of
samples and iterating them through a Markov chain whose invariant
distribution is the posterior. Despite the wide adoption, MCMC has
a few drawbacks: (a)  the convergence rates and mixing times of
the Markov chains are generally unknown, thus leading to practical
shortcomings like ``burn in'' periods of discarded samples; (b)
the samples generated are from a Markov chain and thus necessarily
correlated -- lowering  effective sample sizes and propagating
errors throughout  estimates as in
\eqref{eqn:conditionalExpectation}; (c) many different highly
specialized, problem-specific variants of MCMC exist, all of which
have their own benefits and challenges \cite{robert2004monte}.

Efficient approximation methods, such as variational Bayes
\cite{jordan1998introduction, jaakkola2001tutorial,
bishop2006pattern} and expectation propagation (EP)
\cite{minka2001expectation, minka2001family, seeger2008bayesian}
have been developed. These offer a complementary alternative to
sampling methods and have allowed Bayesian techniques to be used
in large-scale applications. Variational methods yield
deterministic approximations to the posterior distribution. A
particular form that has been used with great success is the
factorized one \cite{jordan1999introduction}.  However, these
methods are based upon approximations and thus there are no
guarantee that the iterations of the approximation methods will
converge, or provide exact results in the limit.

\subsection{Optimal transport framework}
\label{subsec:OTF} Recently, El Moselhy et al. proposed a method
to construct a \itt{map} that \itt{pushed forward} the prior
measure to the posterior measure, casting Bayesian inference as an
optimal transport problem \cite{el2012bayesian}. Namely, the
constructed map
 \itt{transforms} a random variable distributed
according to the prior into another random variable distributed
according to the posterior. This approach is \itt{conceptually}
different from previous methods, including sampling and
approximation methods.

Optimal transport theory has a long and deep history dating back
to Monge \cite{monge1781}, Kontarovich \cite{kantorovich1942mass},
and most recently Villani \cite{villani2003topics}. This study of
measure preserving maps has its roots and applications in vast
areas, spanning resource allocation, dynamical systems, optimal
control, and fluid mechanics.

Monge initially stated the ``earth movers'' problem as finding the
cheapest way to move a pile of sand in a specific location and
shape, to a \itt{different} location and shape while maintaining
the same volume. While this captures the mass-preserving and
optimality criterion of the map, it is worth explaining the
\itt{effect} of the map a little more by way of an example. We can
relate the optimal map construction to finding the optimal
placement of ``pegs'' in a Galton board, to transform one
distribution of balls into another as shown in
Fig.~\ref{fig:plinko}. Conceptually, one could imagine searching
for a unique beam placement resulting in a \itt{different} output
distribution. For example, Fig.~\ref{fig:plinko}~(a) shows the
trivial schematic diagram of a map that pushes forward a uniform
distribution $P$ to another uniform $Q$, while
Fig.~\ref{fig:plinko}~(b) pushes forward a uniform $P$ to a
Gaussian distribution $Q$.

\begin{figure}[t]
\centering
\includegraphics[width=12cm]{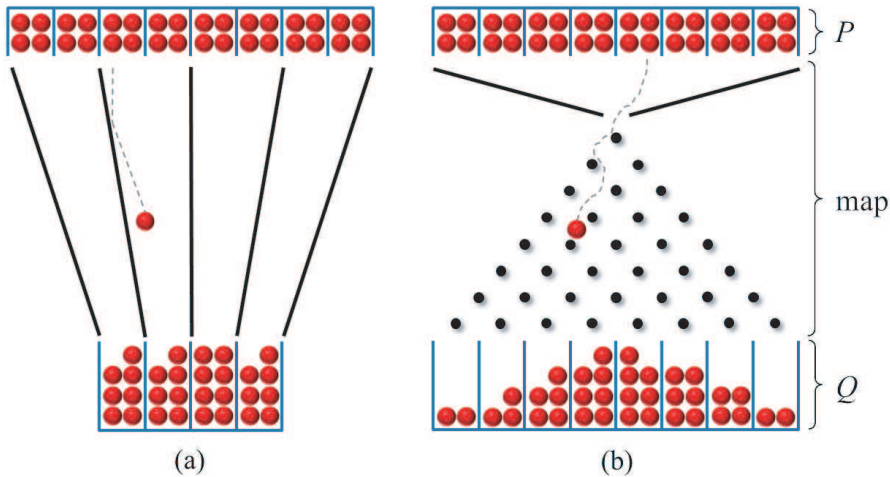}
\caption{Conceptual explanation of transforming samples from a
distribution $P$ to distribution $Q$ through the use of a Galton
board. Two different placements provide some intuition for a
map-based construction: (a) a uniform to another uniform, and (b)
a uniform to a Gaussian.} \label{fig:plinko}
\end{figure}

\subsection{Our contribution}
\label{subsec:contribution} The use of an optimal transport map
for Bayesian inference was proposed in \cite{el2012bayesian} by
minimizing the variance of an operator, but it was a non-convex
problem -- even for the case of log-concavity of prior and
likelihood. In \cite{kim2013efficient}, we consider the optimal
transport map viewpoint established in  \cite{el2012bayesian}, but
we replace variance minimization with an equivalent approach based
upon KL divergence minimization.  Remarkably, for the case of
log-concave priors and likelihoods, we showed this KL divergence
minimization is a convex optimization problem and thus tractable.

The rest of the paper is outlined as follows. In
section~\ref{sec:gen-push-thrm}, we first consider a more general
problem: transforming samples from one continuous source
distribution  into samples from another target distribution. We
demonstrate with optimal transport theory that when the source
distribution can be easily sampled from and the target
distribution is log-concave, a KL divergence minimization
procedure yields samples from target distribution with convex
optimization.  We develop an empirical and truncation approach to
computationally approximate the convex problem.  We exploit the
sub-exponential tail property of log-concave distributions to
prove consistency of the proposed scheme in the remainder of this
section.  In section~\ref{sec:BayesInference}, we  demonstrate
that fully Bayesian inference (e.g. where the source is the prior
and target is the posterior) is a special case.  Remarkably, we
show that the tractability criterion becomes log-concavity of the
prior and likelihood in $x$: the  same criterion for obtaining
tractable MAP estimates in \eqref{eqn:MAPestimate}.  This implies
that  general purpose frameworks for Bayesian point estimation
with convex optimization can be improved to fully Bayesian
inference, still with convex optimization.
Section~\ref{sec:results} demonstrate how we can attain the  Bayes
risk in simulations.  With physiologic data, we demonstrate
improvements over point estimation in intensive care unit (ICU)
outcome prediction and in sleep staging based upon
electroencephalography (EEG) recordings. We conclude with a
discussion in Section~\ref{sec:discussion}.

\section{General push-forward theorem}
\label{sec:gen-push-thrm} Before going into the details, we
present a general theorem of an optimal map construction to
transform a distribution $P$ to another distribution $Q$. We also
informally go through an illustrating example. This general
push-forward theorem will be used in the Bayesian inference
framework in the next section.

\subsection{Problem setup}
\label{subsec:genprobsetup}

We provide a problem setting with notations and definitions
relevant to the development of the push-forward theorem where the
latent variable is in a continuum. For a set $\cX \subset
\reals^d$ where  $d$ is a positive integer, define the space of
all probability measures on $\cX$ as $\probSimplex{X}$. Then the
\itt{push-forward} theorem is defined as follows.

\begin{definition}[Push-forward]
Given $P \in \probSimplex{X}$ and $Q \in \probSimplex{X}$, we say
that the map $S: \cX \to \cX$ \textbf{pushes forward} $P$ to $Q$
(denoted as $\pushforward{S}{P}{Q}$) if a random variable $U$ with
distribution $P$ results in $Z \triangleq S(X)$ having
distribution $Q$.
\end{definition}

We say that $S: \cX \to \cX$ is a \itt{diffeomorphism} on $\cX$ if
$S$ is invertible and both $S$ and $S^{-1}$ are differentiable.
Denote the set of all diffeomorphisms on $\cX$ as
$\diffeomorphisms$. With this, we have the following lemma from
standard probability:

\begin{lemma}
Consider a diffeomorphism $S \in \diffeomorphisms$ and $P$, $Q\in
\probSimplex{X}$ that both have the densities $p$, $q$ with
respect to the Lebesgue measure. Then $S_\# P = Q$ if and only if
\begin{eqnarray}
p(x) = q(S(x)) \absdetSx, \quad \forall x \in \cX
\label{eqn:defn:JacobianEqn}
\end{eqnarray}
where $J_{S}(x)$ is the Jacobian matrix of the map $S$ at $x$.
\end{lemma}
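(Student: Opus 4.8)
The plan is to prove both directions of the equivalence at once by reducing the pushforward relation to an integral identity and then invoking the change-of-variables theorem for diffeomorphisms. By definition, $S_\# P = Q$ holds precisely when $Q(B) = P(S^{-1}(B))$ for every Borel set $B \subseteq \cX$, which is in turn equivalent to the weak formulation
\[
\int_{\cX} g(z)\, q(z)\, dz = \int_{\cX} g(S(x))\, p(x)\, dx
\]
holding for every bounded continuous test function $g : \cX \to \reals$ (this is just $\E[g(Z)] = \E[g(S(U))]$ for $U \sim P$, $Z \triangleq S(U)$). It therefore suffices to show that this integral identity holds for all such $g$ if and only if \eqref{eqn:defn:JacobianEqn} holds.

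First I would transform the right-hand integral by the substitution $z = S(x)$. Since $S \in \diffeomorphisms$ is invertible with differentiable inverse, the inverse function theorem gives $J_{S^{-1}}(z) = \parenth{\jac_S(S^{-1}(z))}^{-1}$, so the Jacobian factor of the substitution is $\abs{\det J_{S^{-1}}(z)} = 1/\absdet{S}{S^{-1}(z)}$, which is finite and nonzero at every point because $S$ is a diffeomorphism. Applying the change-of-variables theorem then yields
\[
\int_{\cX} g(S(x))\, p(x)\, dx = \int_{\cX} g(z)\, \frac{p(S^{-1}(z))}{\absdet{S}{S^{-1}(z)}}\, dz .
\]
Comparing with the left-hand integral, the identity holds for all $g$ exactly when the two integrands coincide as densities, namely
\[
q(z) = \frac{p(S^{-1}(z))}{\absdet{S}{S^{-1}(z)}} \quad \text{for all } z \in \cX ,
\]
and substituting $z = S(x)$ and rearranging gives precisely \eqref{eqn:defn:JacobianEqn}.

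The main obstacle is the passage from \emph{the integral identity holds for every test function $g$} to \emph{the integrands are equal pointwise}. I would handle this with the standard fact that two locally integrable functions whose integrals against all bounded continuous (equivalently, compactly supported smooth) test functions agree must be equal Lebesgue-almost everywhere; because $p$, $q$, $S$, and $S^{-1}$ are all continuous, both candidate densities are continuous in $z$, so almost-everywhere equality upgrades to equality at every point of $\cX$. The remaining ingredients---the inverse-function-theorem relation between $J_{S^{-1}}$ and $J_S$, and the substitution itself---are routine consequences of $S$ being a diffeomorphism.
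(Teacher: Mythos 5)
The paper offers no proof of this lemma at all---it is invoked as a fact ``from standard probability''---so there is no in-paper argument to compare against; what you have written is the standard textbook derivation, and its core is sound. You reduce $S_\# P = Q$ to the weak identity $\int_{\cX} g(z)\,q(z)\,dz = \int_{\cX} g(S(x))\,p(x)\,dx$ for all bounded continuous $g$, apply the change-of-variables theorem for the substitution $z = S(x)$ (which is valid here: injective, everywhere-differentiable maps satisfy it, so the paper's weak notion of diffeomorphism suffices), and conclude equality of the two candidate densities. This handles both directions at once, and you correctly identify that the only delicate point is passing from the integral identity to a pointwise one.

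That last step, however, contains a genuine overstep: you upgrade almost-everywhere equality to equality at every $x \in \cX$ ``because $p$, $q$, $S$, and $S^{-1}$ are all continuous.'' Continuity of $p$ and $q$ is not among the hypotheses---the lemma only assumes that $P$ and $Q$ have Lebesgue densities, and densities are defined only up to null sets---and under the paper's definition of $\diffeomorphisms$ ($S$ and $S^{-1}$ differentiable, not necessarily $C^1$), the factor $\absdetSx$ need not be continuous either. So your argument legitimately yields \eqref{eqn:defn:JacobianEqn} only for Lebesgue-almost every $x$, not for all $x$; indeed, the ``only if'' direction as literally stated fails for an arbitrary choice of density versions (alter $p$ on a null set and it is still a density for $P$, yet the identity breaks at those points). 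This is as much a defect of the lemma's phrasing as of your proof---the intended content is the a.e. statement, and your converse direction (Jacobian identity implies pushforward) goes through untouched---but to be rigorous you should either weaken the conclusion to ``for Lebesgue-a.e.\ $x$'' or add the hypotheses (continuous densities, $C^1$ diffeomorphism) under which your continuity upgrade is actually available.
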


Throughout this section it is worth discussing a simple example:
\begin{example}\label{example:twoOptimalMaps}
Let $X \sim P$ where $P$ is a uniform on $[0,2]$, and consider
building the transformation $Z = S(X)$ so that $Z \sim Q$ where
$Q$ is a uniform on $[0,1]$. The desired transformation is
represented by the solid lines in
Fig.~\ref{fig:image-distributions-maps2}. Note that clearly the
maps $S^*(x)=\frac{1}{2}x$ and $S^*(x) = \frac{1}{2}(2-x)$
transform $P$ to $Q$ and thus satisfy the Jacobian equation
\eqref{eqn:defn:JacobianEqn}. One is increasing, the other is
decreasing in $x$. Note that there are \textbf{multiple maps} that
push  $P$ to $Q$.  An arbitrary map $S$, e.g., $S(x) =
\frac{1}{3}x$, does {\bf not necessarily} push  $P$ to $Q$, but
rather pushes \itt{some} $\tP \neq P$ to $Q$, namely,
$\tP=\text{unif}[0,3]$. This transformation is represented by the
dotted line in Fig.~\ref{fig:image-distributions-maps2}.
\end{example}

\begin{figure}[t]
\centerline{\includegraphics[width=6.5cm]{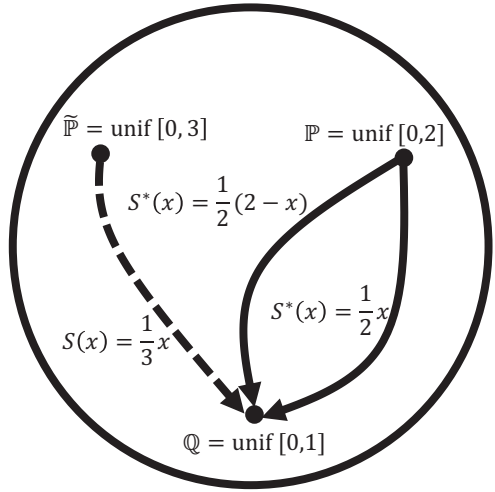}}
\caption{Transformation from a distribution $P=\text{unif}[0,2]$
to $Q=\text{unif}[0,1]$. There are multiple maps $S^*$ that push
forward $P$ to $Q$: $\frac{1}{2}x$ and $\frac{1}{2}(2-x)$. An
arbitrary map $S$ will always push forward \itt{some} $\tP$ to the
target $Q$, as shown above, but the $\tP$ does not need to be the
$P$ we started from.} \label{fig:image-distributions-maps2}
\end{figure}

Given a fixed density $q$ and an arbitrary diffeomorphism $S$, the
corresponding Jacobian equation for the induced density $\tp_S$ is
given by
\begin{eqnarray}
\tp_{S}(x) = q(S(x))|\det \parenth{J_S(x)}|, \quad \forall x \in
\cX. \label{eqn:JacobianEquation:b}
\end{eqnarray}
We denote it by $\tp_S$ to make clear that given $q$ and $S$, the
associated $\tp$, which $S$ pushes to $Q$, is functionally
dependent upon $S$ through the right hand side of
\eqref{eqn:JacobianEquation:b}. Note that the left-hand side of
\eqref{eqn:defn:JacobianEqn} involves the true $p$ induced by the
optimal $S^*$, whereas the left-hand side of
\eqref{eqn:JacobianEquation:b} involves the $\tp_S$ induced by an
arbitrary $S$.

Next, we propose an algorithm to find an optimal nonlinear map
that satisfies the Jacobian equation in
(\ref{eqn:defn:JacobianEqn}) by searching over all possible maps
$S$ that push forward some $\tP_S$ to $Q$ given by
\eqref{eqn:JacobianEquation:b}. Finding a map $S^*$ that pushes
$P$ to $Q$ is equivalent to finding a map $S$ for which the
``distance'' between $\tP_S$ and $P$ is zero.  Using KL
divergence, this becomes:
\begin{align}
\!\!\!D(P \| \tP_S) &= \E_P \brackets{\log \frac{p(X)}{\tilde{P}_S(X)}} \\
                    &= -h(P) + \E_P \brackets{-\log \tp_S(X)}
\end{align}
where $h(P)$ is the Shannon differential entropy of
$P$\cite{cover2006elements}, which is fixed with respect to $S$.
For the remainder of the manuscript, we assume the following:
\begin{assumption} \label{assumption:entropyP:finite}
$\E\brackets{|\log p(X)|} < \infty$.
\end{assumption}
From Jensen's inequality, Assumption~\ref{assumption:entropyP:finite} implies  $\abs{h(P)} < \infty$ and so we have:\\
\fbox{
\begin{tabular}{l l}
\problemA & $\displaystyle \min_{S \in \diffeomorphisms}$
$\kldist{P}{\tP_S}$ $\;\Leftrightarrow\;$  $\displaystyle \max_{S
\in \diffeomorphisms} \;\; \E_P \brackets{\log \tp_S(X)}$
\end{tabular}
}\\
$\;$\\
We note the following:
\begin{lemma}
$S$ is optimal for problem~\problemA if and only if $S$ pushes $P$
to $Q$.
\end{lemma}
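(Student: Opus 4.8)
The plan is to reduce the optimality statement to the non-negativity and equality conditions of the KL divergence, and then to translate the resulting density identity back into a push-forward statement via the change-of-variables Lemma established earlier in this section.

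First I would record two facts. By the non-negativity of relative entropy (Gibbs' inequality), $\kldist{P}{\tP_S} \ge 0$ for every $S \in \diffeomorphisms$, with equality if and only if $P = \tP_S$, i.e. $p(x) = \tp_S(x)$ for almost every $x$. Second, by the change-of-variables Lemma, $\pushforward{S}{P}{Q}$ holds if and only if $p(x) = q(S(x))\absdetSx$ for all $x \in \cX$; since the right-hand side is exactly $\tp_S(x)$ by \eqref{eqn:JacobianEquation:b}, this says precisely that $S$ pushes $P$ to $Q$ if and only if $\tP_S = P$.

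For the forward (``if'') direction, suppose $S$ pushes $P$ to $Q$. Then $\tP_S = P$ by the second fact, so $\kldist{P}{\tP_S} = 0$. Because the objective of \problemA is bounded below by $0$, this value is the global minimum, and hence $S$ is optimal. For the reverse (``only if'') direction, let $S$ be optimal. The subtlety is that optimality only forces the objective to be minimal, not a priori zero; to conclude I must know that the minimal value is in fact $0$. This follows once I exhibit at least one diffeomorphism pushing $P$ to $Q$ --- which exists under mild regularity by optimal transport theory \cite{villani2003topics} (e.g. the monotone Knothe--Rosenblatt rearrangement, and concretely the increasing maps in Example~\ref{example:twoOptimalMaps}) --- so that by the forward direction the minimum over $\diffeomorphisms$ equals $0$. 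An optimal $S$ therefore achieves $\kldist{P}{\tP_S} = 0$, the equality condition of the first fact gives $\tP_S = P$, and the second fact then yields $\pushforward{S}{P}{Q}$.

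The main obstacle is precisely this reverse direction's reliance on the existence of a push-forward map, so that the optimum can be certified to equal zero rather than merely being minimal among the candidates searched; without such a witness the ``only if'' implication is vacuous. A secondary technical point is reconciling the almost-everywhere equality produced by the KL equality condition with the everywhere equality required in \eqref{eqn:defn:JacobianEqn}, but this is harmless since $S$ is a diffeomorphism and the densities $p$, $q$ are continuous, so equality off a null set upgrades to equality on all of $\cX$.
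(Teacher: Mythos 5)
Your proposal is correct and follows the same core route as the paper: both directions rest on the non-negativity and equality conditions of the KL divergence, combined with the fact that \eqref{eqn:JacobianEquation:b} makes $\tP_S = P$ equivalent to $\pushforward{S}{P}{Q}$. The difference is one of rigor, and it favors you: the paper's reverse direction simply asserts that an optimal $S$ has $\kldist{P}{\tP_S}=0$, silently assuming that the minimum value of problem~\problemA{} is zero; as you correctly observe, that assumption needs a witness, namely the existence of \emph{some} diffeomorphism pushing $P$ to $Q$, which you supply via optimal transport theory (equivalently, the Knothe--Rosenblatt rearrangement). In the paper this existence is only established afterwards, in Theorem~\ref{theorem:problemB:optimal} via the Monge--Kantorovich problem, so the paper's proof of the lemma as written has exactly the gap you flag, and your patch is consistent with (indeed anticipates) the paper's own later machinery. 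Your final remark about upgrading almost-everywhere equality of densities to the everywhere identity in \eqref{eqn:defn:JacobianEqn} is a further refinement the paper does not address; it is harmless under the continuity you invoke, though strictly speaking the paper never assumes $p$ and $q$ continuous, so one could alternatively just interpret the push-forward condition at the level of measures, where the a.e.\ identity already suffices.
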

\begin{proof}
By the definition of the Jacobian equation induced by $S$ in
\eqref{eqn:JacobianEquation:b}, $S$ pushes some $\tP_S$ to $Q$.
Assume $\tP_S=P$. Then $\kldist{P}{\tP_S}=0$, and from the
non-negativity of KL divergence, we have that $S$ solves
Problem~\problemA. Now, assume $S$ is optimal for
Problem~\problemA. Note that the KL divergence $\kldist{P}{\tP_S}$
is zero if and only if $P=\tP_S$ and thus $S$ pushes $\tP_S \equiv
P$ to $Q$.
\end{proof}

Now we note from Example~\ref{example:twoOptimalMaps} that in
general there can be more than one optimal solution, some of which
satisfy $\det\parenth{J_{S}(x)}>0$, and others, for which
$\det\parenth{J_{S}(x)} < 0$. Both maps are equally ``as good''.
However, from an optimization viewpoint, the search space is so
``rich''
 that this leads to non-convexity in an optimization problem.

\subsection{A Convex Problem in Infinite Dimensions}
\label{subsec:convexprob}

We here consider restricting our search to orientation-preserving
diffeomorphisms, i.e., ones with positive definite Jacobian: \beqa
\MD \triangleq \braces{S \in \diffeomorphisms: J_S(x) \succ 0 \;
\forall x \in \cX}.
\label{eqn:defn:orientation-preserving-diffeomorphisms} \eeqa This
eliminates possible maps such as $S^*(x)=\frac{1}{2}(2-x)$ in
Example~\ref{example:twoOptimalMaps}. As such, for any $S \in
\MD$, the Jacobian equation becomes:
\begin{eqnarray}
\tp_S(x) = q(S(x)) \det \parenth{J_S(x)}.
\label{eqn:ModJacobianEqn}
\end{eqnarray}
  This gives rise to following problem, which simply involves a restriction of the feasible set in Problem~\problemA to orientation-preserving maps:\\
\fbox{
\begin{tabular}{l l}
\problemB & $\displaystyle \min_{S \in \MD}$  $\kldist{P}{\tP_S}$
$\;\Leftrightarrow\;$  $\displaystyle \max_{S \in \MD} \;\; \E_P
\brackets{\log \tp_S(X)}$
\end{tabular}
}\\

With this, we can state the following theorem:
\begin{theorem} \label{theorem:problemB:optimal}
Problem \problemB has an optimal solution $S^*$, which is also
optimal for Problem~\problemA, and thus pushes $P$ to $Q$.
\end{theorem}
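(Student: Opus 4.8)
The plan is to collapse the entire statement to a single existence claim --- that there is at least one orientation-preserving diffeomorphism $S^* \in \MD$ with $\pushforward{S^*}{P}{Q}$ --- and then deduce optimality for both problems from the non-negativity of KL divergence. First I would record the cheap half-inequality: since $\MD \subset \diffeomorphisms$, the feasible set of Problem~\problemB sits inside that of Problem~\problemA, so the optimal value of \problemB is at least that of \problemA. By the lemma already established, any $S$ that pushes $P$ to $Q$ satisfies $\tP_S = P$ and hence $\kldist{P}{\tP_S} = 0$; because KL divergence is non-negative, this value $0$ is the global minimum over \emph{any} feasible set. Consequently it suffices to exhibit one $S^* \in \MD$ pushing $P$ to $Q$: such an $S^*$ attains the value $0$ and is therefore optimal for \problemB, and since $0$ is simultaneously the global lower bound for \problemA over the larger set $\diffeomorphisms$, the same $S^*$ is optimal for \problemA as well. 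This is precisely the assertion of the theorem, so all the work is pushed into the construction.

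The construction of $S^*$ is where I would invoke optimal transport theory. By Brenier's theorem, there exists a convex potential $\phi$ whose gradient $S^* = \nabla \phi$ is the essentially unique map pushing $P$ forward to $Q$ while minimizing the quadratic transport cost. Because $S^*$ is the gradient of a convex function, its Jacobian $J_{S^*}(x) = \nabla^2 \phi(x)$ is symmetric and positive semidefinite wherever it is defined, which already gives the correct orientation. To place $S^*$ genuinely inside $\MD$ I would appeal to Caffarelli's regularity theory for the associated Monge--Amp\`ere equation: when $p$ and $q$ are smooth and bounded away from zero on convex supports --- conditions that the log-concavity of the target $Q$ supplies on the $q$ side --- the potential $\phi$ is smooth and strictly convex, so $S^*$ is a diffeomorphism with $J_{S^*}(x) \succ 0$ for every $x \in \cX$. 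With $S^* \in \MD$ and $\pushforward{S^*}{P}{Q}$ confirmed, the modified Jacobian equation \eqref{eqn:ModJacobianEqn} holds pointwise and the reduction above closes the argument.

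The main obstacle is exactly this last regularity step. Monotonicity of the Brenier map yields only an almost-everywhere, measure-theoretic positive semidefiniteness of the Jacobian; upgrading this to a genuine diffeomorphism with a strictly positive definite Jacobian \emph{everywhere} --- so that $S^*$ is an admissible element of $\MD$ rather than merely a measurable transport map --- requires the full strength of the Caffarelli estimates together with the smoothness, positivity, and convex-support hypotheses on the two densities. Everything else is a short consequence of the set inclusion $\MD \subset \diffeomorphisms$ and the non-negativity of $\kldist{P}{\tP_S}$; if one wishes to avoid the Monge--Amp\`ere regularity machinery, an alternative is to build $S^*$ explicitly as the Knothe--Rosenblatt rearrangement, whose triangular, componentwise-increasing structure gives a lower-triangular Jacobian with positive diagonal and hence lands in $\MD$ directly.
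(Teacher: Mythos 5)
Your proposal takes essentially the same route as the paper: both exhibit the quadratic-cost Monge--Kantorovich (Brenier) map as the witness $S^* \in \MD$ pushing $P$ to $Q$, and then conclude optimality for \problemA and \problemB from non-negativity of the KL divergence together with the inclusion $\MD \subset \diffeomorphisms$. If anything you are more careful than the paper, which asserts without qualification that this optimal map is a diffeomorphism with $J_{S^*}(x) \succ 0$ everywhere, while you correctly flag that this step requires Caffarelli-type regularity hypotheses; the one weak spot is your Knothe--Rosenblatt alternative, whose triangular Jacobian has positive diagonal entries but need not be positive definite (in the symmetric-part sense), so it does not obviously lie in $\MD$.
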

\begin{proof}
Since $P$ and $Q$ have densities $p$ and $q$ with respect to the
Lebesgue measure, we consider the Monge-Kontarovich problem with
Euclidean distance cost \cite{villani2003topics}:
\begin{eqnarray*}
\quad \min_{S: \cX \to \cX} && \int_{x \in \cX}  \|x
-S(x)\|^2 p(x)dx \\
\text{s.t.} && \pushforward{S}{P}{Q}.\nonumber
\end{eqnarray*}
Key properties of its optimal solution $S^*$ include: (i) $S \in
\diffeomorphisms$, and (ii) $S^*(x) = \nabla h(x)$ where $h$ is a
strictly convex function (which implies that $J_{S^*}(x) \succ 0$
for all $x \in \cX$). Thus $S^*$ lies in the feasible set of
problem~\problemB.  But from the Monge-Kantorovich problem, $S^*$
pushes $P$ to $Q$ and is thus optimal for Problem~\problemA. Since
the feasible set of problem~\problemA contains the feasible set of
problem~\problemB, $S^*$ is optimal for Problem~\problemB
\end{proof}

In Example~\ref{example:twoOptimalMaps}, both $\frac{1}{2}x$ and
$\frac{1}{2}(2-x)$ push $P$ to $Q$, but $\frac{1}{2}x$ is
increasing and the other is decreasing. Problem \problemB finds  a
map $S$ whose Jacobian matrix $J_S$ is positive definite (e.g.,
$S(x)=\frac{1}{2}x$). Remarkably, from
Theorem~\ref{theorem:problemB:optimal}, the restriction to
monotonicity suffers no loss in optimality.  Moreover, for many
problems of interest, it guarantees that finding the optimal
solution is tractable:
\begin{theorem}\label{thrm:loq-concavity}
  If $\log q(x)$ is concave in $x$, then problem \problemB is a convex optimization problem with a unique optimal solution.
\end{theorem}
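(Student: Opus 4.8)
The plan is to exhibit Problem~\problemB as the maximization of a concave functional over a convex set, and then to strengthen concavity of the determinant term to obtain uniqueness. Writing the objective explicitly via \eqref{eqn:ModJacobianEqn},
$$ \E_P\brackets{\log \tp_S(X)} = \E_P\brackets{\log q(S(X))} + \E_P\brackets{\log \det \parenth{J_S(X)}}, $$
the key structural observation is that both $S \mapsto S(x)$ (pointwise evaluation) and $S \mapsto J_S(x)$ (pointwise Jacobian) are \emph{linear} operations on the vector space of candidate maps. The feasible set $\MD$ is cut out by the pointwise constraint $J_S(x) \succ 0$; since the cone of positive definite matrices is convex and $S \mapsto J_S(x)$ is linear, $\MD$ is convex.

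First I would establish concavity of the objective. Because $\log q$ is concave by hypothesis and $S \mapsto S(x)$ is linear, the map $S \mapsto \log q(S(x))$ is concave for each fixed $x$, and integrating against $P$ preserves concavity; hence the first term is concave. For the second term I would invoke the standard fact that $M \mapsto \log \det M$ is concave on the positive definite cone, compose with the linear map $S \mapsto J_S(x)$, and integrate against $P$; this shows the second term is concave on $\MD$. The sum is therefore concave over the convex set $\MD$, so Problem~\problemB is a convex optimization problem.

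To obtain uniqueness I would sharpen the determinant estimate: $\log\det$ is in fact \emph{strictly} concave on the positive definite cone, its second differential at $M$ in direction $H$ being $-\operatorname{tr}\parenth{(M^{-1}H)^2} < 0$ for $H \neq 0$. Suppose $S_1, S_2 \in \MD$ are both optimal. By concavity the whole segment $S_\lambda = \lambda S_1 + (1-\lambda) S_2$ is optimal, so the objective is constant along it; since the objective is a sum of two concave terms whose interpolation lower bounds must jointly be tight, each term attains equality in its concavity inequality. Strict concavity of $\log\det$ then forces $J_{S_1}(x) = J_{S_2}(x)$ for $P$-almost every $x$. On the connected interior of $\operatorname{supp}(P)$ this makes $S_1 - S_2$ constant; but by Theorem~\ref{theorem:problemB:optimal} both $S_1$ and $S_2$ push $P$ to $Q$, and a nondegenerate probability measure cannot be invariant under a nonzero translation, so that constant vanishes and $S_1 = S_2$.

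The main obstacle is the uniqueness step rather than the convexity step, which is essentially bookkeeping with composition rules. Two points need care. First, the constraint $J_S(x) \succ 0$ must be read as \emph{symmetric} positive definiteness, consistent with the Brenier maps $S = \nabla h$ produced in Theorem~\ref{theorem:problemB:optimal}; this is what guarantees both that $\log\det$ is the relevant concave function and that convex combinations remain in $\MD$. Second, passing from equality of the Jacobian fields $P$-a.e.\ to equality of the maps requires connectedness of $\operatorname{supp}(P)$ together with the pushforward constraint to eliminate the additive constant, and I would dispose of the measure-zero and boundary issues by working on the interior of $\operatorname{supp}(P)$ and using continuity of $S_1$ and $S_2$.
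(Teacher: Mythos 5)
Your proposal is correct, and its convexity half is the paper's own argument essentially verbatim: $\MD$ is convex because $S \mapsto J_S(x)$ is linear and the positive-definite cone is convex, and the objective $\E_P\brackets{\log q(S(X))} + \E_P\brackets{\log\det\parenth{J_S(X)}}$ is a sum of concave functions composed with linear maps. Where you genuinely depart from the paper is uniqueness. The paper's proof ends by asserting that, since $\log\det(\cdot)$ is strictly concave on the positive-definite cone, the whole objective is strictly concave in $S$ (it writes ``strictly convex,'' a sign typo), and concludes uniqueness in one line. You correctly decline to make that inference: the linear map $S \mapsto J_S$ annihilates constant shifts, so along the segment from $S$ to $S+c$ the determinant term is affine and the $\log q$ term is merely concave; hence the objective is \emph{not} strictly concave on $\MD$, and the paper's one-liner has a real (if repairable) gap precisely in the translation directions. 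Your replacement argument --- optimality along the whole segment forces each concave summand to meet its interpolation bound, strict concavity of $\log\det$ then gives $J_{S_1}=J_{S_2}$ $P$-a.e., hence $S_1-S_2$ is constant on the connected interior of $\mathrm{supp}(P)$, and the constant vanishes because both maps push $P$ to the same $Q$ and no probability measure on $\reals^d$ is invariant under a nonzero translation --- supplies exactly the missing step, so your uniqueness proof is strictly stronger than the paper's. Your caveat that $J_S(x)\succ 0$ must be read as \emph{symmetric} positive definiteness is also substantive rather than cosmetic: under the weaker reading (positive quadratic form), uniqueness is simply false, e.g.\ a small rotation pushes a standard Gaussian to itself and its Jacobian has positive symmetric part. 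The loose ends you flag are minor and fixable: concluding that \emph{both} optima push $P$ to $Q$ takes one extra line (Theorem~\ref{theorem:problemB:optimal} makes the optimal value of Problem~\problemB equal to zero KL, and zero KL forces $\tP_S=P$), and passing from an a.e.-vanishing derivative of $S_1-S_2$ to constancy wants $C^1$ regularity (or continuity of the derivative) together with connectedness of the interior of the support, neither of which the paper makes explicit either.
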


\begin{proof}
Note that if $S,\tS \in \MD$  then $J_S,J_{\tS}\succ 0$. For any
$\lambda \in [0,1]$, $J_{\lambda S + (1-\lambda) \tS}(x)=\lambda
J_S(x) + (1-\lambda) J_{\tS}(x) \succ 0$.  Thus $\MD$ is convex.
Now we note from \eqref{eqn:ModJacobianEqn} that
\[\E_P \brackets{\log \tp_S(X)} = \int_{\cX} \brackets{\log q(S(x))+\log \det (J_S(x))} p(x) dx.\]
Since  $\log\det(\cdot)$ is strictly concave over the space of
positive definite matrices and since $\log q(\cdot)$ is concave,
we have that $\E_P \brackets{\log \tp_S(X)}$ is a sum of strictly
convex and convex functions, which itself is {\it strictly convex}
in $S$.  Thus an optimal solution $S^*$, which exists from
Theorem~\ref{theorem:problemB:optimal}, is unique.
\end{proof}

We make the following remark as it relates to other methods
involving KL divergence minimization:
\begin{remark}
Variational Bayes \cite{jaakkola2001tutorial} and EP
\cite{minka2001expectation} methods are also based on the
minimization of a KL divergence. However, the KL divergence
minimization is of a reverse kind, is not over a space of maps,
and is thus conceptually different.  Moreover, these methods build
upon deterministic approximations to the posterior,  do not
guarantee exactness, and in  general are non-convex.
\end{remark}

Although problem~\problemB is convex for log-concave $q$, note
that the feasible set is an infinite-dimensional space of
functions, and the objective function involves an expectation
(e.g. a $d$-dimensional integral) with respect to $P$. Both of
these require further effort to be implemented in real
computational settings.


\subsection{A Convex Problem in Finite Dimensions via Truncated Basis}
Problem \problemB involves minimizing an expectation with respect
to $P$ over $\MD$, an infinite-dimensional space of functions. We
here consider representing any $S \in \MD$ in terms of its
(truncated) orthogonal basis representation with respect to $P$.
We consider maps $m$ of the form:
\begin{eqnarray}
m(x) = \sum_{j \in \mathcal{J}}w_j \basisfn{j}(x)
\label{eqn:defn:linearbasis}
\end{eqnarray}
where $\basisfn{j}(x) \in \reals$ are $d$-variate bases, $w_j \in
\reals^d$ are basis coefficients, and $\mathcal{J}$ is a set of
all possible indices $j$.

One natural way to do this, if $\cX \subset \reals$, is to perform
a polynomial chaos expansion (PCE) of the nonlinear optimal map
\cite{xiu2003wiener, ernstConvergencePolynomialChaos2012}, meaning
that we select $(\basisfn{j}: j \geq 1)$ so that they are
orthogonal {\it with respect to $P$}: \beqa \int_{\cX}
\basisfn{i}(x) \basisfn{j}(x) p(x)dx = \indicatorvbl{i=j}.
\label{eqn:PCE:orthogonal} \eeqa For example, if $\cX = [-1,1]$
and $P$ is uniformly distributed, then $\basisfn{j}(x)$ are the
Legendre polynomials, and if $\cX = \reals$ and $P$ is Gaussian,
then $\basisfn{j}(x)$ are the Hermite polynomials
\cite{xiu2003wiener}.

\begin{remark}
  In principle, any basis of polynomials, for which the truncated expansion of
  functions is dense in the space of all functions on $\cX$, suffices. Using the
  PCE where orthogonality is measured with respect to the
  prior, means that computing conditional expectations and other calculations can
  be done only with linear algebra.
\end{remark}

When $|\mathcal{J}|=K$, we represent a nonlinear map as
\begin{eqnarray}
m(x)=W \Phi(x), \qquad\qquad d \times 1
\label{eqn:polynomialchoas}
\end{eqnarray}
where $W=[w_1,\ldots, w_K]$ is $d \times K$, and
$\Phi(x)=[\basisfn{1}(x),\ldots,\basisfn{K}(x)]^T$ is $K \times
1$. The Jacobian matrix $J_m(x)$ is also expressed as
\begin{eqnarray}
J_m(x) = W J_{\Phi}(x),  \qquad\qquad d \times d
\end{eqnarray}
where $J_{\Phi}(x)= \brackets{\partial \basisfn{i}/\partial x_j
(x)}_{i,j}$ is $K \times d$.

For $\cX \subset \reals^d$, a polynomial basis $\phi^{(j)}(x)$ can
be represented using tensor products as
\begin{eqnarray}
  \basisfn{j}(x) = \prod_{a=1}^d \psi_{j_a}(x_a)
\end{eqnarray}
where $\psi_{j_a}$ is a univariate polynomial of order $j_a$, and
$j \to (j_1,\ldots, j_d)$ is defined in a standard diagonal
manner. For example, if $d=2$, then we have $j \to (j_1,j_2)$
where $0\leq j_1+j_2\leq d$, and thus polynomial bases are given
by
\begin{eqnarray}
\begin{array}{cc}
  \basisfn{0}(x) = \psi_0(x_1) \psi_0(x_2), & \basisfn{1}(x) = \psi_0(x_1) \psi_1(x_2), \\
  \basisfn{2}(x) = \psi_1(x_1) \psi_0(x_2), & \basisfn{3}(x) = \psi_0(x_1) \psi_2(x_2), \\
  \basisfn{4}(x) = \psi_1(x_1) \psi_1(x_2), & \basisfn{5}(x) = \psi_2(x_1) \psi_0(x_2).
\end{array}
\end{eqnarray}


Since $J_m(x)=W J_{\Phi}(x)$ need not be positive definite, we use
the Euclidean projection, or equivalently the proximal operator of
the indicator function of $\MD$ \cite{ParikhBoydProximalAlgs2013}:
\beqa
S_W &=& \projectionMD{W \Phi} \\
 \projectionMD{W \Phi}(x) &\triangleq& \!\!\!\! \argmin_{m(x)=\tilde{W} \Phi(x): J_m(x) \succ 0}  \|m(x)-W\Phi(x)\|^2. \nonumber
\eeqa

By defining $\MDK \triangleq \braces{ S_W: W \in \reals^{d \times K} }$, we can define the following optimization problem:\\
\fbox{
\begin{tabular}{l l}
\problemCK & $\displaystyle \min_{S \in \MDK}$ $\kldist{P}{\tP_S}$
$\;\Leftrightarrow\;$  $\displaystyle \max_{S \in \MDK} \;\; \E_P
\brackets{\log \tp_S(X)}$
\end{tabular}
}\\

We now show that $\MDK \subset \MD$ is dense in $\MD$:

\begin{theorem}
If $q$ is log-concave, then problem~\problemCK is a finite
dimensional convex optimization problem with a unique optimal
solution, which we denote as $S_K$.  Moreover: \beqa \lim_{K \to
\infty} \kldist{P}{\tP_{S_K}} = 0. \eeqa
\end{theorem}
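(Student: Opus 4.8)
The plan is to separate the two assertions: finite-dimensional convexity with uniqueness, and the $K\to\infty$ convergence. For the first, I would parametrize $\MDK$ by the coefficient matrix $W$, writing $S_W = W\Phi$ with Jacobian $J_{S_W}(x) = W J_\Phi(x)$. The constraint $J_{S_W}(x) \succ 0$ for every $x \in \cX$ is an intersection over $x$ of positive-definiteness constraints on $W$, each convex, so the effective feasible set $\{W : W J_\Phi(x) \succ 0 \ \forall x\}$ is convex and the projection $\projectionMD{\cdot}$ simply returns $W\Phi$ on this set. On it the objective $\E_P[-\log\tp_{S_W}(X)] = \int_\cX [-\log q(W\Phi(x)) - \log\det(W J_\Phi(x))]\,p(x)\,dx$ is strictly convex in $W$ by exactly the argument of Theorem~\ref{thrm:loq-concavity}: $-\log q$ is convex and composed with the affine $W\mapsto W\Phi(x)$, while $-\log\det$ is strictly convex and composed with the linear $W\mapsto W J_\Phi(x)$. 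Since the orthonormal basis $(\basisfn{j})$ is linearly independent, $W\mapsto S_W$ is injective, so strict convexity yields a unique minimizer $S_K$; existence I would obtain from coercivity of the objective, which diverges as $W$ approaches the boundary where $\det(W J_\Phi) \to 0$ and as $\norm{W}\to\infty$.

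For the convergence claim I would use optimality of $S_K$: since $S_K$ minimizes $\kldist{P}{\tP_S}$ over $\MDK$, we have $0 \le \kldist{P}{\tP_{S_K}} \le \kldist{P}{\tP_{S_W}}$ for every $S_W \in \MDK$, so it suffices to exhibit a single sequence of feasible finite-dimensional maps whose divergence tends to zero. The natural target is the true optimal transport map $S^* = \nabla h \in \MD$ supplied by Theorem~\ref{theorem:problemB:optimal}, which pushes $P$ to $Q$ and hence satisfies $\kldist{P}{\tP_{S^*}} = 0$.

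The core is then a density-plus-continuity argument. Because $S^*$ is the gradient of a strictly convex potential, it is $C^1$ with $J_{S^*}(x)\succ 0$, so completeness of the polynomial basis (in a norm controlling both values and first derivatives on $\cX$) lets me choose coefficients $W_K$ with $W_K\Phi \to S^*$ and $W_K J_\Phi \to J_{S^*}$. For $K$ large the approximating Jacobian is close to $J_{S^*}\succ 0$; applying $\projectionMD{\cdot}$ then keeps $S_{W_K} = \projectionMD{W_K\Phi}$ in $\MDK$ while perturbing it by an amount that vanishes as the approximation sharpens, so $S_{W_K}\to S^*$ together with its Jacobian. Continuity of the objective gives $\kldist{P}{\tP_{S_{W_K}}} = \int_\cX [\log p(x) - \log q(S_{W_K}(x)) - \log\det J_{S_{W_K}}(x)]\,p(x)\,dx \to \kldist{P}{\tP_{S^*}} = 0$, and the sandwich above forces $\kldist{P}{\tP_{S_K}}\to 0$.

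The hard part will be justifying the interchange of limit and integral in that last step: both $-\log q(S_{W_K}(X))$ and $-\log\det J_{S_{W_K}}(X)$ must be dominated uniformly in $K$ for dominated convergence (or uniform integrability) to apply. This is precisely where log-concavity of $q$ enters beyond mere convexity — its sub-exponential tail decay, together with Assumption~\ref{assumption:entropyP:finite} and the sub-exponential tails of $P$, is what bounds $-\log q(S_{W_K}(X))$ on the tails, while keeping the approximate Jacobians bounded away from singularity controls the $\log\det$ term. Verifying that the projection step neither destroys positive-definiteness nor spoils this integrability is, I expect, the most delicate point of the proof.
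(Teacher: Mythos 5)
Your overall architecture coincides with the paper's: both proofs dispose of the convexity/uniqueness half by the argument of Theorem~\ref{thrm:loq-concavity} (your coercivity remark for existence is actually more than the paper supplies), and both reduce the limit claim, via optimality of $S_K$ over $\MDK$, to exhibiting a single feasible sequence whose divergence vanishes, built by truncating a basis expansion of the optimal transport map of Theorem~\ref{theorem:problemB:optimal} (your $S^*$, the paper's $\Sinf$) and then applying $\projectionMD{\cdot}$.

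The genuine gap is your density claim. You need the polynomial basis to be complete ``in a norm controlling both values and first derivatives,'' i.e.\ a weighted Sobolev-type density statement with respect to $P$ on a possibly unbounded $\cX$, with the approximating Jacobians converging too. You never justify this, and it is not innocuous: on an unbounded domain, polynomials need not be dense even in $L_2$ of a given measure unless its tails are controlled. This is precisely where the paper's proof uses log-concavity, and where your sketch misplaces it. In the paper, $Z=\Sinf(X)\sim Q$ log-concave has sub-exponential tails, $\E\brackets{e^{c\norm{Z}}}<\infty$, and this is exactly the hypothesis of the cited density theorem \cite[Theorem 3.7]{ernstConvergencePolynomialChaos2012} guaranteeing that the polynomial chaos is dense in $L_2(\cX,\sigma(X),P)$; the paper then takes the $L_2$-orthogonal truncation $v_K$ of $\Sinf$, sets $\tS_K=\projectionMD{v_K}$, and preserves $L_2$ convergence through the projection via the firm non-expansive property of the proximal operator \cite{ParikhBoydProximalAlgs2013}. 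In your proposal the sub-exponential tail property is invoked only to dominate integrands in the dominated-convergence step, not to secure the density itself, so the linchpin of your argument remains an assumption rather than a lemma, and a strictly stronger one than what the literature the paper relies on provides. In fairness, the step you flag as hardest---passing from convergence of the maps to $L_1$ convergence of $\log q(S_{W_K}(X))+\log\det J_{S_{W_K}}(X)$, which requires control of the Jacobians and uniform integrability---is handled in the paper only by the terse assertion that continuity of $\log q$ and $\log\det$ plus ``$L_2$ implies $L_1$'' suffices; your diagnosis of that weak point is accurate, and a derivative-controlling approximation would be one way to repair it, but only if you can actually prove the stronger density result on which your whole construction rests.
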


\begin{proof}
Define  $\Sinf$ to be the unique solution to Problem~\problemB. As
such, the random variable $Z=\Sinf(X)$ is drawn according to $Q$,
which is log-concave. It is well known that any log-concave random
variable satisfies the sub-exponential tail property:
\begin{eqnarray*}
\E \brackets{ e^{c \norm{Z}}} < \infty, \quad \text{ for some } c
> 0.
\end{eqnarray*}
This is a sufficient condition \cite[Theorem
3.7]{ernstConvergencePolynomialChaos2012} for the tensor products
of the generalized polynomial chaos  $\Phi_1(x), \Phi_2(x),
\ldots$ to be dense in $L_2(\cX,\sigma(X),P)$, implying:
\begin{align}
Z = \Sinf(X) &\underbrace{=}_{L_2} \lim_{K \to \infty} v_K(X), \label{eqn:Sinf:Ltwobasisexpansion} \\
v_K(X)  &\triangleq \sum_{i=1}^K \innerprod{\Sinf(X),\Phi_i(X)}
\Phi_i(X).
\end{align}
Now define \beqa
 \tS_K(X)= \projectionMD{v_K}(X), \label{eqn:defn:tSK}
\eeqa and note that \beqa
&&  \lim_{K \to \infty} \E\brackets{\|\Sinf(X)-\tilde{S}_K(X)\|^2} \nonumber \\
&=& \lim_{K \to \infty} \E\brackets{\|\projectionMD{\Sinf}(X)-\projectionMD{v_K}(X)\|^2} \label{eqn:PCE:convergence:optimalMap:b}\\
&\leq& \lim_{K \to \infty} \E\brackets{\|\Sinf(X)-v_K(X)\|^2} \label{eqn:PCE:convergence:optimalMap:c}  \\
&=&  0 \label{eqn:PCE:convergence:optimalMap:d} \eeqa where
\eqref{eqn:PCE:convergence:optimalMap:b} follows because
$\Sinf(X)=\projectionMD{\Sinf}(X)$ (since $\Sinf \in \MD$) and
from \eqref{eqn:defn:tSK};
\eqref{eqn:PCE:convergence:optimalMap:c} follows from the  firm
non-expansive property of the proximal operator
\cite{ParikhBoydProximalAlgs2013}; and
\eqref{eqn:PCE:convergence:optimalMap:d} follows from
\eqref{eqn:Sinf:Ltwobasisexpansion}.

Thus from  \eqref{eqn:PCE:convergence:optimalMap:d}, we have that
$\tilde{S}_K(X) \to_{L_2} \Sinf(X)$.  Note that $\log
q(\Sinf(X))+\log\det J_{\Sinf}(X) = \log p(X)$ which from
Assumption~\ref{assumption:entropyP:finite} lies in
$L_1(\cX,\sigma(X),P)$.  In addition, $\log q(\cdot)$ is concave
(and thus continuous), and  $\log \det(\cdot)$ is concave (and
thus continuous) over the space of positive definite matrices.
Since $L_2$ convergence implies $L_1$ convergence, we have:
\begin{eqnarray*}
\log \frac{p(X)}{\tp_{\tS_K}(X)} \!\!\!\!\!\!&=& \!\!\!\! \log p(X) - \log q\parenth{\tS_K(X) } -\log \det J_{\tS_K}(X) \\
 \!\!\!\!\!\!\!\!&\underbrace{\to}_{L_1}& \!\!\!\! \log p(X) - \log q\parenth{\Sinf(X)} -\log \det J_{\Sinf}(X) \\
 &=& 0.
\end{eqnarray*}
Since $L_1$ convergence implies convergence in distribution, we
have that $\kldist{P}{\tP_{\tS_K}} \to 0$. But since $\tS_K \in
\MDK$ and $S_K$ is the optimal solution to problem~\problemCK, we
have that $\kldist{P}{\tP_{S_K}} \leq \kldist{P}{\tP_{\tS_K}} \to
0$.
\end{proof}

\subsection{Stochastic Convex Optimization in Finite Dimensions}
\label{subsec:implementation:finiteDimensions} Note that the
expectation in problem~\problemCK is given as
\[\E_P \brackets{\log \tp_S(X)} = \int_{\cX} \brackets{\log q(S(x))+\log \det (J_S(x))} p(x) dx,\]
which cannot in general be evaluated for any $S$.  But since this
is an expectation with respect to $P$, we can define a probability
space $(\Omega,\mathcal{F},\P)$ pertaining to $\textit{i.i.d.}$
samples $(X_1,X_2,\ldots)$ drawn from from $P$. We define $P_n$ as
the empirical distribution on $(X_1,\ldots,X_n)$ and then consider
the empirical expectation:
\[\E_{P_n} \brackets{\log \tp_S(X)} = \frac{1}{n}\sum_{i=1}^n \log q(S(X_i))+\log \det (J_S(X_i)).\]
This gives rise to a  stochastic optimization problem \\
\fbox{
\begin{tabular}{l l}
\problemDKn & $\displaystyle \max_{S \in \MDK} \;\; \E_{P_n}
\brackets{\log \tp_S(X)}$
\end{tabular}
}\\
Since this problem only involves $S(x)$ and $J_S(x)$ evaluated at
$(X_1,\ldots,X_n)$, we can solve this with a multi-step procedure:
given \textit{i.i.d.} $\parenth{X_i}_{i=1,N}$ from $P$, evaluate a
priori the associated $\parenth{\Phi_i \triangleq
\Phi(X_i)}_{i=1,N}$,  $\parenth{J_i \triangleq
J_\Phi(X_i)}_{i=1,N}$. From here, we solve for
\begin{subequations} \label{eqn:W:maximization}
\beqa
\max_{W \in \reals^{d\times K}}  && \frac{1}{N}\sum_{i=1}^N \log q\parenth{W \Phi_i}+\log \det (W J_i) \\
 s.t. && W J_1 \succ 0, \ldots, W J_N \succ 0.
\eeqa
\end{subequations}
Given $\WKn$, we employ the proximal operator
 \beqa
 \SKn(x)  \triangleq S_{\WKn}(x) = \projectionMD{\WKn \Phi}(x). \label{eqn:defn:SKn}
 \eeqa
 so that $\SKn \in \MDK$.

\begin{remark}
Note that Problem~\problemDKn is only tractable if generating
\textit{i.i.d.} samples from $P$ is tractable.  Luckily, $P$ in
many situations is a well-defined distribution that can be sampled
from easily, such as Gaussian, exponential family, uniform, or sum
of Gaussians.    More generally, if $p(x)$ is log-concave, then we
can do as follows: simply first solve an auxiliary
Problem~$\overline{\text{\problemDKn}}$ with
$\bar{P}=\text{uniform}$ or $\bar{P}=\mathcal{N}(0,\Sigma)$, or
any other distribution we can easily sample from.  Then define
$\bar{Q}=P$.  By implementing problem
$\overline{\text{\problemDKn}}$, we will generate a map $\bar{S}$
that can transform \textit{i.i.d.} samples from $\bar{P}$, which
is easy to sample from, into \textit{i.i.d.} samples from $P$.
From here, we can move forward and implement Problem $\problemDKn$
to generate a map $S$ to push $P$ to $Q$.
\end{remark}

This optimization problem can be implemented with convex
optimization software such as CVX in MATLAB, CVXPY in Python, etc.
Here, we implemented the algorithms with CVX \cite{grant2008cvx}
in MATLAB.

\begin{theorem}\label{theorem:problemCKn:KLminimization}
The map $\SKn$ as defined in \eqref{eqn:defn:SKn} is the unique
minimizer of problem~\problemDKn and moreover, \beqa \lim_{K \to
\infty} \limn \kldist{P}{\tP_{\SKn}}  = 0  \quad \P-a.s.
\label{eqn:thm:problemCKn:KLminimization} \eeqa
\end{theorem}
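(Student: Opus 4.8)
First I would dispatch uniqueness. The empirical objective $\frac{1}{n}\sum_{i=1}^n \brackets{\log q(W\Phi_i) + \log\det(W J_i)}$ is, exactly as in the proof of Theorem~\ref{thrm:loq-concavity}, the sum of a concave term (from log-concavity of $q$) and the term $\log\det(W J_i)$, which is strictly concave in $W$ along every direction $\Delta$ with $\Delta J_i \neq 0$. For $n$ large enough that no nonzero $\Delta \in \reals^{d\times K}$ satisfies $\Delta J_\Phi(X_i)=0$ for all $i$ — which holds $\P$-a.s.\ because $P$ is continuous — the averaged objective is strictly concave on the convex feasible set $\braces{W : W J_i \succ 0 \ \forall i}$, so $\WKn$ is unique and hence so is $\SKn = \projectionMD{\WKn \Phi}$.

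For the limit I would split $\kldist{P}{\tP_{\SKn}} = \brackets{\kldist{P}{\tP_{\SKn}} - \kldist{P}{\tP_{S_K}}} + \kldist{P}{\tP_{S_K}}$, where $S_K$ is the population solution of Problem~\problemCK. The second term vanishes as $K\to\infty$ by the preceding theorem (establishing $\limK \kldist{P}{\tP_{S_K}} = 0$), so it suffices to show that for each fixed $K$ the bracketed difference tends to $0$ $\P$-a.s.\ as $n\to\infty$, i.e.\ that $\SKn$ is a consistent estimator of $S_K$. Since there are only countably many values of $K$, intersecting the corresponding probability-one events produces a single event of full probability on which $\limK \limn \kldist{P}{\tP_{\SKn}} = 0$, which is exactly the claim.

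To prove consistency I would view Problems~\problemCK and~\problemDKn as the population and empirical problems of maximizing $\obj(W) = \E_P\brackets{\log\tp_{S_W}(X)}$ and its sample average over $W$, and invoke the standard consistency theory for extremum estimators, which needs three ingredients: (i) $\obj$ has a unique maximizer $\WK$, which is Theorem~\ref{thrm:loq-concavity}; (ii) the search may be confined to a fixed compact set $\mathcal{W}$, which I would extract from log-concavity of $q$, since along the tails $-\log q$ grows at least linearly in $\norm{W\Phi(x)}$ whereas $\log\det\parenth{W J_\Phi(x)}$ grows only logarithmically, forcing $\obj(W)\to-\infty$ as $\norm{W}\to\infty$ so that both the population and (eventually) the empirical maximizers lie in a common compact $\mathcal{W}$; and (iii) a uniform strong law $\sup_{W\in\mathcal{W}} \abs{\E_{P_n}\brackets{\log\tp_{S_W}(X)} - \obj(W)} \to 0$ $\P$-a.s. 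Together these give $\WKn \to \WK$ $\P$-a.s.; continuity of $W \mapsto \kldist{P}{\tP_{\projectionMD{W\Phi}}}$ — inherited from the firm non-expansiveness of the proximal operator \cite{ParikhBoydProximalAlgs2013} and the continuity of $\log q$ and $\log\det$ — then upgrades this to $\kldist{P}{\tP_{\SKn}} \to \kldist{P}{\tP_{S_K}}$ $\P$-a.s.

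The main obstacle is the uniform law in (iii): the integrand $\log q(W\Phi(x)) + \log\det\parenth{W J_\Phi(x)}$ is unbounded, since the $\log\det$ term diverges to $-\infty$ near the boundary of the feasible cone and the polynomial arguments are unbounded in $x$. The hard part will be constructing an integrable envelope $\E_P\brackets{\sup_{W\in\mathcal{W}} \abs{\log\tp_{S_W}(X)}} < \infty$, after which a uniform strong law of large numbers (e.g.\ Jennrich's theorem) delivers (iii). I would build this envelope from the sub-exponential tail property of log-concave distributions exploited in the preceding theorem — which makes every polynomial moment of $\Phi(X)$ finite — together with Assumption~\ref{assumption:entropyP:finite}, while keeping $\mathcal{W}$ bounded away from the cone boundary so that the negative part of $\log\det$ remains $P$-integrable. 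That single envelope estimate is the delicate point; the remainder is the routine extremum-estimator machinery.
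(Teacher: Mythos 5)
Your proposal follows the same skeleton as the paper's proof: uniqueness from strict concavity; for fixed $K$, almost-sure consistency $\SKn \to \SK$ of the empirical maximizer; continuity of the KL objective to upgrade this to $\kldist{P}{\tP_{\SKn}} \to \kldist{P}{\tP_{\SK}}$; then the outer limit in $K$ via the preceding theorem (the countable intersection over $K$ that you make explicit is left implicit in the paper). Where you genuinely diverge is how consistency is established. The paper does not run the Wald/Jennrich machinery itself: it first shows, using the firm non-expansiveness of the proximal operator and the $P$-orthonormality of $\Phi$, that $\norm{S_W - S_{W'}}_P \leq \norm{W-W'}_P$, so that $\MDK$ is a locally compact set parametrized by $W \in \reals^{d \times K}$, and then it invokes Berk's consistency theorem for maximizers of empirical criteria over locally compact parameter sets (citing strict concavity and Assumption~\ref{assumption:entropyP:finite} for its hypotheses) to conclude $\prob{\limn \SKn = \SK}=1$. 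You instead rebuild consistency from first principles: confinement to a compact set via a coercivity argument ($-\log q$ growing linearly in $\norm{W\Phi(x)}$ versus logarithmic growth of $\log\det$), a uniform strong law with an integrable envelope, and continuity. Your route is more self-contained and makes visible the integrability difficulties that the paper's citation absorbs; the paper's route is shorter precisely because Berk's theorem packages the compactness-plus-ULLN work you describe. Note, though, that your envelope --- the step you yourself flag as the hard part --- is left unconstructed, so your argument is a sound plan rather than a finished proof; this is essentially the same content the paper outsources, so neither treatment is more complete on that point.

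Two concrete issues. First, your genericity claim for strict concavity --- that $\P$-a.s.\ no nonzero $\Delta$ satisfies $\Delta J_\Phi(X_i)=0$ for all $i$ --- is false: every PCE basis contains the constant polynomial, whose gradient vanishes identically, so the corresponding row of $J_\Phi(x)$ is zero for \emph{every} $x$, and any $\Delta$ supported on that column annihilates all the $J_\Phi(X_i)$ regardless of $n$. Along such directions strictness must come from $\log q$, which is only assumed concave (e.g.\ a Laplace prior), so strict concavity in $W$ can genuinely fail; to be fair, the paper's own one-line uniqueness claim has the same gap, but your attempt to patch it introduces a specifically incorrect statement. Second, and more minor: your coercivity step needs the \emph{empirical} objectives to be coercive uniformly in $n$ (almost surely) in order for the empirical maximizers to eventually lie in the same compact $\mathcal{W}$ as the population maximizer; this requires an additional one-sided strong-law argument that should be stated, since it is exactly what licenses the word ``eventually.''
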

\begin{proof}
The uniqueness of the minimizer follows from strict concavity of
the problem.  That $\SKn$ is the minimizer follows because
\[ \SKn(X_i) = \projectionMD{\WKn \Phi}(X_i) = \WKn\Phi(X_i), \;\; i=1,\ldots,n \]
because the positive definite constraint was imposed on $\WKn$ in
the feasible set of the problem in \eqref{eqn:W:maximization}.
Thus $\SKn$ corresponds to an optimal solution of the optimization
problem defined in \eqref{eqn:W:maximization}, which is a
relaxation to problem~\problemDKn: they have the same objective
but the feasible set of the problem in  \eqref{eqn:W:maximization}
contains the feasible set in \problemDKn. Since $\SKn$ solves the
relaxation, it solves the original problem.

Now, we can prove \eqref{eqn:thm:problemCKn:KLminimization}. Note
that for any $S=\projectionMD{W\Phi} \in \MDK$ and
$S'=\projectionMD{W'\Phi} \in \MDK$, we have \beqa
\|S-S'\|^2_P &=& \int_{\cX} \|\projectionMD{W\Phi}(x)-\projectionMD{W'\Phi}(x)\|^2p(x)dx \nonumber \\
             &\leq& \int_{\cX} \|W\Phi(x)-W'\Phi(x)\|^2 p(x)dx \label{eqn:proof:thm:problemCKn:KLminimization:a}\\
             &=& \int_{\cX} \|(W-W') \Phi(x)\|^2 p(x)dx \nonumber\\
             &=& \int_{\cX} \Phi(x)^T (W-W')^T(W-W') \Phi(x) p(x)dx \nonumber\\
             &=& \|W-W'\|_P^2 \label{eqn:proof:thm:problemCKn:KLminimization:b}
\eeqa where \eqref{eqn:proof:thm:problemCKn:KLminimization:a}
follows from the firm non-expansive property of the proximal
operator \cite{ParikhBoydProximalAlgs2013}; and
\eqref{eqn:proof:thm:problemCKn:KLminimization:b} follows from the
fact that $\Phi$ is a $P$-orthonormal matrix.
Now since $W,W' \in \reals^{d \times K}$, from
\eqref{eqn:proof:thm:problemCKn:KLminimization:b}, we have that
with $\MDK$ is locally compact. Thus we can exploit the strict
convexity of problem~\problemDKn over a locally compact constraint
set and Assumption~\ref{assumption:entropyP:finite} to conclude
\cite[Theorem 3.1]{Berk72ConsistencyNormality} that $\prob{\lim_{n
\to \infty}\SKn = \SK}=1$.


With this, we have that
\[
\limn \kldist{P}{\tP_{\SKn}}  = \kldist{P}{\tP_{\SK}} \quad
\P-a.s.
\]
By taking an outer limit in $K$, we complete the proof.
\end{proof}



\section{Optimal Transport and Bayesian Inference}
\label{sec:BayesInference}

In this section, we apply the general push-forward theorem to the
context of Bayesian inference. We demonstrate that for a large
class of prior and likelihood families, an optimal map can be
efficiently constructed through convex optimization.

We assume that $X \in \reals^d$ is drawn \itt{a priori} according
to $P_X$, which has a density $p_X(x)$ with respect to Lebesgue
measure. We define the likelihood function in density form as
$p_{Y|X}(y|x)$. Having observed $Y=y$, the posterior distribution
$P_{X|Y=y}$ has a density given by $p_{X|Y=y}(x|y)$, determined by
Bayes' rule in \eqref{eqn:BayesRule}.
In general, the calculation of $\beta_y$ is intractable,
especially when $d$ is in high dimension or $p_X(x)$ is not a
conjugate prior for $p_{Y|X}(y|x)$. Several methods have been
developed to bypass its calculation or approximate different
functions of the posterior. Typical approaches to perform this are
Monte Carlo methods \cite{Liu2008}. MCMC methods are families
where samples are drawn from a Markov chain, whose invariant
distribution is that of the posterior. As described in
Section~\ref{sec:intro}, one problem of these methods is that
because samples are drawn from a Markov chain, they are
necessarily statistically {\bf dependent}; so the law of averages
kicks in more slowly.  Also, efficient MCMC methods are typically
tailored to the specifics of the prior and likelihood and as such,
lack generality. In addition, many natural situations
\itt{require} the calculation of $\beta_y$, such as information
gain calculations mentioned in
\eqref{eqn:posteriorInformationGain}.


\subsection{Problem Setup}
\label{subsec:problemsetup} To make use of the general
push-forward theorem, we set $P=P_X$ as the prior distribution
$Q=P_{X|Y=y}$ as the posterior distribution. Then we can find a
diffeomorphism $S_y^*$, for which $S_y^* \# P = Q$, or
equivalently $S_y^* \# P_X = P_{X|Y=y}$. In this Bayesian
inference framework, we denote the optimal map $S^*$ as $S_y^*$
with a subscript $y$, since a map depends on each observation $y$.
The Jacobian equation for orientation-preserving maps
\eqref{eqn:ModJacobianEqn} within the context of Bayes' rule
\eqref{eqn:BayesRule} becomes
\begin{eqnarray}
p_X(x) = \frac{p_{Y|X}(y|S^*_y(x)) p_X\parenth{S^*_y(x)}}{\beta_y}
\det{\parenth{\jac_{S^*_y}(x)}}.
\label{eqn:JacobianEqn:BayesRule:a}
\end{eqnarray}
Next, we (a) exchange $p_X(x)$ in the left-hand side of
(\ref{eqn:JacobianEqn:BayesRule:a}) with $\beta_y$ in the
right-hand side in order to put all the $x$-dependent terms to the
right-hand side and (b) take the logarithm and use an arbitrary
map $S_y(x)$, to define the operator $T: \MD \times \cX \to
\reals$ as in \cite{el2012bayesian}:
\begin{align}
T(S,x) &\triangleq \log p_{Y|X}(y|S(x)) + \log p_X\parenth{S(x)} \nonumber \\
&+  \logdetJSx - \log p_X(x). \label{eqn:defn:F}
\end{align}
Using (\ref{eqn:defn:F}), we can now state
\eqref{eqn:JacobianEqn:BayesRule:a} equivalently as follows.
\begin{lemma}
  \label{lemma:variationalPrinciple:a}
  A diffeomorphism $S^*_y$ satisfies $\pushforward{S_y^*}{P_X}{P_{X|Y=y}}$ if and
  only if
  \beqa
    T(S_y^*,x) \equiv \log \beta_y, \quad \forall x \in \cX.
    \label{eqn:variationalPrinciple:a}
  \eeqa
\end{lemma}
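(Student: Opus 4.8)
The plan is to read the claimed equivalence as a purely algebraic consequence of the change-of-variables identity, specialized to orientation-preserving maps, combined with Bayes' rule. Concretely, I would invoke the first Lemma of the section (restricted to $\MD$, where $\absdetSx = \det\parenth{J_{S}(x)}$ because the Jacobian is positive definite, so that \eqref{eqn:ModJacobianEqn} holds): it asserts that $\pushforward{S_y^*}{P_X}{P_{X|Y=y}}$ holds if and only if the Jacobian equation $p_X(x) = p_{X|Y=y}\parenth{S_y^*(x)\mid y}\,\det\parenth{J_{S_y^*}(x)}$ holds for all $x \in \cX$. Substituting the posterior density from Bayes' rule \eqref{eqn:BayesRule} into the right-hand side reproduces exactly \eqref{eqn:JacobianEqn:BayesRule:a}. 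It therefore suffices to show that \eqref{eqn:JacobianEqn:BayesRule:a} is equivalent to $T(S_y^*,x) \equiv \log\beta_y$.

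For the forward direction, assume $\pushforward{S_y^*}{P_X}{P_{X|Y=y}}$, so that \eqref{eqn:JacobianEqn:BayesRule:a} holds pointwise. Every factor appearing there is strictly positive on the relevant support, so I may take logarithms without ambiguity. Taking $\log$ of both sides of \eqref{eqn:JacobianEqn:BayesRule:a} and moving $\log p_X(x)$ to the right and $-\log\beta_y$ to the left gives precisely
\[
\log\beta_y = \log p_{Y|X}(y\mid S_y^*(x)) + \log p_X\parenth{S_y^*(x)} + \log\det\parenth{J_{S_y^*}(x)} - \log p_X(x),
\]
whose right-hand side is exactly the definition of $T(S_y^*,x)$ in \eqref{eqn:defn:F}. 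Hence $T(S_y^*,x) \equiv \log\beta_y$.

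For the converse I would simply reverse the chain: if $T(S_y^*,x) = \log\beta_y$ for every $x$, then exponentiating and rearranging recovers \eqref{eqn:JacobianEqn:BayesRule:a} verbatim, and by the (orientation-preserving) change-of-variables Lemma this is equivalent to $\pushforward{S_y^*}{P_X}{P_{X|Y=y}}$. Since each operation used -- taking logarithms, additive rearrangement, and exponentiation -- is a bijection on the relevant domain, the two conditions are logically equivalent, which completes the argument.

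The only genuine subtlety, and thus the step I would guard most carefully, is the well-definedness of the logarithm: this requires $p_X$, $p_X\circ S_y^*$, and $p_{Y|X}(y\mid\cdot)\circ S_y^*$ to be strictly positive and $\det\parenth{J_{S_y^*}(x)}$ to be positive. The latter is automatic from the restriction $S_y^* \in \MD$; the density positivity holds on the common support, which is exactly where the Jacobian equation is meaningful and where a diffeomorphism between the supports of $P_X$ and $P_{X|Y=y}$ can exist. No convexity, optimization, or limiting argument is needed here -- the entire content of the lemma is that the awkward normalization constant $\beta_y$ is absorbed into an additive constant, so that the push-forward condition becomes simply the statement that $T(S_y^*,\cdot)$ is constant in $x$.
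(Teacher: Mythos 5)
Your proposal is correct and follows essentially the same route as the paper: the paper presents this lemma as an immediate restatement of \eqref{eqn:JacobianEqn:BayesRule:a}, obtained exactly as you do by specializing the change-of-variables lemma to orientation-preserving maps, substituting Bayes' rule, taking logarithms, and rearranging (with the reverse direction made explicit in \eqref{eqn:LLLR-T:a}--\eqref{eqn:LLLR-T:b}). Your added care about strict positivity of the densities and of $\det\parenth{J_{S_y^*}(x)}$ is a point the paper leaves implicit, but it does not change the argument.
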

Note that this encodes a variational principle. The left-hand side
of (\ref{eqn:variationalPrinciple:a}) is allowed to vary with $x$.
But for $S^*_y$, at {\bf any} $x$, $T(S_y^*,x)$ takes on the same
value. Thus this suggests a particular problem
formulation\cite{el2012bayesian}:
\begin{align}
  \qquad S_y^* &= \argmin_{S_y \in \MD} \obj_1(S_y), \\
  \obj_1(S_y) &\triangleq \int_{x \in \cX}\parenth{T(S_y,x)-\E[T(S_y,X)]}^2 p_X(x)
  dx. \nonumber
\end{align}

\begin{remark}
  Attempting to solve this problem is in general computationally intractable.
  If we approximate a diffeomorphism decision variable $S_y$ using a truncated basis expansion (e.g. PCE),
  the above problem is still non-convex for log-concave priors and likelihoods. This follows because
  under log concavity, $T(S,X)$ is concave in $S$, but quadratic functions of differences of concave functions are in general non-convex.
\end{remark}

  Our insight is to
  abandon the approach espoused in \cite{el2012bayesian} and instead focus on a
  subset of common problems with log-concave structure, using an alternative KL
  divergence based criterion. This leads to a computationally tractable algorithm via convex
  optimization.

\subsection{A convex problem}
We now show that for many natural priors and likelihoods we can
efficiently find a diffeomorphism $S_y^*$, for which
$\pushforward{S_y^*}{P_X}{P_{X|Y=y}}$, using an alternative
optimality criterion described in Section~\ref{sec:gen-push-thrm}.
Consider any other diffeomorphism $S_y(x)$ that induces some
$\tilde{P}_X$ whose density is denoted as $\tp_X(x)$. From the
modified Jacobian equation~(\ref{eqn:JacobianEqn:BayesRule:a}):

\begin{eqnarray}
\tp_X(x) = \frac{p_{Y|X}(y|S_y(x)) p_X\parenth{S_y(x)}}{\beta_y}
\det{\parenth{\jac_{S_y}(x)}}. \label{eqn:defn:tf:a}
\end{eqnarray}
By careful inspection of \eqref{eqn:defn:tf:a} and
\eqref{eqn:defn:F}, we then have that

\begin{eqnarray}
\log \frac{p_X(x)}{\tp_X(x)} = \log \beta_y - T(S_y,x).
\label{eqn:LLLR-T:a}
\end{eqnarray}
So if a diffeomorphism $S_y^*$ satisfies $S_y^* \# P_X =
P_{X|Y=y}$, then $p_X(x)=\tp_X(x)$, which means
\begin{eqnarray}
0 = \log \beta_y - T(S^*_y,x) \Leftrightarrow T(S^*_y,x)=\log
\beta_y, \;\; \forall x \in \cX. \label{eqn:LLLR-T:b}
\end{eqnarray}
Thus we find a map to minimize the KL-divergence between $P_X$ and
$\tilde{P}_X$, which is given by
\begin{eqnarray}
\kldist{P_X}{\tilde{P}_X} =\log \beta_y - \int_{x \in \cX} p_X(x)
T(S_y,x) dx. \nonumber
\end{eqnarray}
This suggests the following optimization problem, equivalent to
\problemB in Section~\ref{sec:gen-push-thrm} for $P=P_X$ and
$Q=P_{X|Y=y}$:
\begin{eqnarray}
\qquad S_y^* &=& \argmax_{S_y \in \diffeomorphisms{X}} \int_{x \in
\cX} p_X(x) T(S_y,x) dx. \label{eqn:objectivefn:a}
\end{eqnarray}
Also note that once we have solved for $S_y^*$ in
\eqref{eqn:objectivefn:a}, we in addition can obtain $\beta_y$ by
virtue of evalution of the $T$ operator using any $x \in \cX$ in
\eqref{eqn:variationalPrinciple:a}.
This is fundamentally the optimization problem we aim to solve for
Bayesian inference. By phrasing the inference problem as an
optimal transport problem along with the natural assumption of
log-concavity of the prior and likelihood, we can create a
computationally efficient method to carry out Bayesian inference
via convex optimization.

\subsection{Implementation}
\label{subsec:implementation} Applying the PCE in
\eqref{eqn:defn:linearbasis} to approximate the function $S(x)$,
we re-define $T(S,x)$ in \eqref{eqn:defn:F} as
\begin{align}
\tilde{T}(W,x) &\triangleq \log p_{Y|X}(y|W\Phi(x)) + \log p_X\parenth{W \Phi(x) } \nonumber \\
& + \log \det (W J_A(x)) - \log p_X(x).
\label{eqn:defn:T:linearbasisexpansion}
\end{align}
By truncating the PCE and approximating the expectation by a
weighted sum of \textit{i.i.d.} samples, we finally arrive at the
computationally tractable convex inference problem for Bayesian
inference:

\begin{align}
\qquad W^* &= \argmax_{W \in \reals^{d\times K} }
\frac{1}{N}\sum_{i=1}^N [\log p_{Y|X}(y|W \Phi(X_i) )\nonumber\\
&\quad +\log p_X\parenth{W \Phi(X_i)} + \log \det (W J_A(X_i))]\nonumber \\
&\text{s.t.}\;\; WJ_{A}(X_1) \succ 0, \ldots, WJ_{A}(X_N) \succ 0
\label{eqn:objectivefn:c}
\end{align}
where $X_1, X_2,\ldots,X_N$ are \textit{i.i.d.} samples drawn from
$P_X$.

\begin{lemma}
\label{lemma:log-concavity-implies-T-concave} If $p_X(x)$ is
log-concave and $p_{Y|X}(y|x)$ is log-concave in $x$, then
$p_{X|Y=y}(x|y)$ is log-concave and the problem
\eqref{eqn:objectivefn:c} is a convex optimization problem.
\end{lemma}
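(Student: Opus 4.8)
The plan is to split the lemma into its two assertions---first that the posterior inherits log-concavity, and then that \eqref{eqn:objectivefn:c} is a concave maximization over a convex feasible set---and to observe that the second assertion is essentially the finite-dimensional specialization of Theorem~\ref{thrm:loq-concavity} with $q = p_{X|Y=y}$. The log-concavity of the posterior is then the bridge connecting the two parts: it is precisely what allows the general concavity argument to be invoked here.

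First I would establish log-concavity of the posterior directly from Bayes' rule \eqref{eqn:BayesRule}. Taking logarithms gives $\log p_{X|Y=y}(x|y) = \log p_X(x) + \log p_{Y|X}(y|x) - \log \beta_y$. The first summand is concave in $x$ by log-concavity of the prior, the second is concave in $x$ by log-concavity of the likelihood in $x$, and $\log \beta_y$ is a constant in $x$. A sum of concave functions is concave, so $\log p_{X|Y=y}(\cdot\,|y)$ is concave and the posterior is log-concave.

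Next I would verify that \eqref{eqn:objectivefn:c} is a convex program by checking concavity of the objective and convexity of the feasible set separately, in the decision variable $W$. Both $W \mapsto W\Phi(X_i)$ and $W \mapsto WJ_A(X_i)$ are linear; since concavity is preserved under composition with an affine map, the terms $\log p_{Y|X}(y|W\Phi(X_i))$ and $\log p_X(W\Phi(X_i))$ are concave in $W$ (using the hypothesized log-concavity of likelihood and prior), and the term $\log\det(WJ_A(X_i))$ is concave in $W$ on the region where $WJ_A(X_i)\succ 0$, because $\log\det(\cdot)$ is concave over the positive-definite cone. The objective is an average of sums of such concave terms, hence concave. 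For the feasible set, each constraint $WJ_A(X_i)\succ 0$ is the preimage of the convex positive-definite cone under the linear map $W\mapsto WJ_A(X_i)$, hence convex, and the feasible set is the finite intersection over $i=1,\dots,N$ of these convex sets, hence convex. Maximizing a concave objective over a convex set is a convex optimization problem.

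Since this argument is a direct specialization of the reasoning in Theorem~\ref{thrm:loq-concavity}, I do not anticipate a genuine obstacle; the work is in assembling standard facts rather than overcoming a hard step. The only point requiring care is the $\log\det$ term: one must note that $WJ_A(X_i)$ is $d\times d$ (since $W$ is $d\times K$ and $J_A(X_i)$ is $K\times d$) and then invoke concavity of $\log\det$ on the positive-definite cone together with invariance of concavity under the linear map $W\mapsto WJ_A(X_i)$. It is also worth remarking that the $-\log\beta_y$ term appearing in the posterior density is an additive constant in $W$ and is therefore harmlessly dropped from \eqref{eqn:objectivefn:c} without affecting the maximizer.
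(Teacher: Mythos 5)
Your proposal is correct and follows essentially the same route as the paper's proof: posterior log-concavity via taking logarithms in Bayes' rule, concavity of the objective in $W$ via preservation of concavity under the affine maps $W \mapsto W\Phi(X_i)$ and $W \mapsto WJ_A(X_i)$ (together with concavity of $\log\det$ on the positive-definite cone), and convexity of the feasible set as an intersection of affine preimages of the positive-definite cone. Your write-up is in fact slightly more careful than the paper's, which leaves the $\log\det$ concavity implicit in this lemma and contains a typo (citing log-concavity of the posterior rather than of the likelihood as the hypothesis), but the substance is identical.
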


\begin{proof}
That $p_{X|Y=y}(x|y)$ is log-concave in $x$ is trivial: it follows
directly from the assumption that $p_X(x)$ and $p_{X|Y=y}(x|y)$
are log-concave in $x$, along with Bayes' rule
\eqref{eqn:BayesRule}. As for showing that $\tilde{T}(W,x)$ is
concave in $W$, this follows from (i) the assumption that $p_X(x)$
and $p_{X|Y=y}(x|y)$ are log-concave in $x$; and (ii) that
concavity is preserved under affine transformations ($W \to
W\Phi(x)$, $W \to W J_A(x)$)\cite{boyd2004convex}. As for the
feasible set, a set of vectors satisfying an affine positive
definite constraint is convex \cite{boyd2004convex}.
\end{proof}

\section{Results}
\label{sec:results}

This section demonstrates how the proposed method was applied in
several examples. Firstly, we tested the accuracy of the proposed
method using a conjugate distribution where  the closed-form
expression of the posterior is known. Next, we considered several
Bayesian inference problems using simulated and real data.
Table~\ref{tab:summary} summarizes the basic settings of these
problems in terms of observations, unknown parameters,
likelihoods, and priors. As described, we considered Gaussian
likelihoods with a sparse (Lapacian or exponential) prior; and
logistic regression likelihoods with a Gaussian prior.  We
considered many fully Bayesian scenarios, including sampling from
the posterior, estimating Bayesian credible regions, and risk
minimization.  We compare performance to point estimation
counterparts by comparing expected losses and by comparing
receiver operating characteristic (ROC) curves. For the sake of
brevity, we denote the densities, $p_{X}(x)$, $p_{Y|X}(y|x)$, and
$p_{X|Y=y}(x|y)$ as $p(x)$, $p(y|x)$, and $p(x|y)$, respectively.

\begin{table}
\caption{Summary of problem settings in Section~\ref{sec:results}:
1) Gaussian likelihood with sparse prior (Laplace or exponential
for $\mathbf{x} > 0$); and 2)
Logistic regression with Gaussian prior. 
} \centerline{
\begin{tabular}{|c|c|c|}
  \hline
  Settings & Gauss $\times$ Laplace & Logis Reg $\times$ Gauss \\
  \hline
  \hline
  $\mathbf{Y}$, $\mathbf{X}$ & $\mathbb{R}^d$, $\mathbb{R}^d$ & $\mathbb{R}^d$, $\mathbb{R}^d$ \\
  \hline
  $-\log P_{\mathbf{Y}|\mathbf{X}}(\mathbf{y}|\mathbf{x})$ &  $\propto ||\mathbf{y}-\mathbf{Mx}||_2^2$ & $\log (1+e^{\mathbf{x}^{\text{T}}\mathbf{y}})-c\mathbf{x}^{\text{T}}\mathbf{y}$ \\
  \hline
  $-\log P_{\mathbf{X}}(\mathbf{x})$ & $\propto ||\mathbf{x}||_1$ & $\propto ||\mathbf{x}||_2^2$ \\
  \hline
\end{tabular}}\label{tab:summary}
\end{table}

\subsection{Conjugate distribution: comparison to closed-form}
\label{subsec:conjugateprior}

We demonstrate how accurately the proposed algorithm can construct
a map using the conjugate distribution where the posterior has the
same form as the prior and could be expressed as closed-form. In
this example, we chose a Poisson likelihood function (where
$\cY=\braces{0,1,\ldots}$) and its corresponding conjugate prior,
a Gamma distribution (where $\cX=[0,\infty)$).

When we have a Poisson likelihood expressed by $p(y|x) = x^y
e^{-x}/y!$ and its conjugate Gamma prior expressed by $p(x) =
1/(\Gamma(a)b^{a})x^{a-1}e^{-x/b}$, then the posterior, $p(x|y)$,
is computed as the Gamma distribution, which is given by

\begin{eqnarray}
p(x|y) \propto x^{a+y-1}e^{-(b+1)/b x}. \label{eqn:gammaposterior}
\end{eqnarray}
That is, the hyper-parameters in the Gamma prior, $a$ and $b$, are
changed as $a+y$ and $b/(b+1)$ in the Gamma posterior. In this
simulation, we set $a=2$ and $b=0.5$, and specified an observation
$y=1$. We then designed multiple maps to push forward the Gamma
prior to the Gamma posterior by changing the number of
\textit{i.i.d.} samples, $N$, drawn from the Gamma prior. We chose
the number of maximum order of the polynomial in
\eqref{eqn:defn:linearbasis} to be 5, resulting in $K=6$.

We then tested the accuracy of the constructed optimal map. The
solid line in Fig.~\ref{fig:accuracy} plots the variance of
$\tilde{T}(W,x)$ in the log-scale. For the optimal parameter
$W^{*}$, $\tilde{T}(W^{*},x)$ is almost constant over $x$, so the
variance is close
 to zero. We also computed the KL-divergence between the
original prior, $P_X$, and the map-dependent prior,
$\tilde{P}_{S^*}$, estimated using the designed map, shown by the
dotted line. As shown, the accuracy increases in both cases as we
increase the number of \textit{i.i.d.} samples we used for the
construction.

\begin{figure}[t]
\centerline{\includegraphics[width=8.0cm]{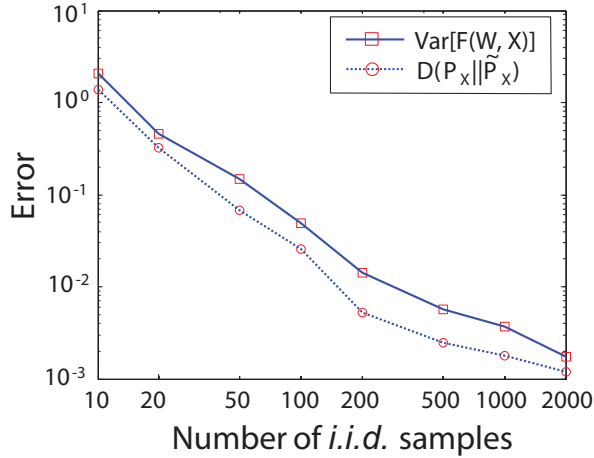}}
\caption{Conjugate distribution problem. The variance of
$\tilde{T}(W,x)$ and the KL-divergence between $P$ and
$\tilde{P}_{S^*}$ decreased with the number $n$ of \textit{i.i.d.}
samples drawn from the prior.} \label{fig:accuracy}
\end{figure}

Fig.~\ref{fig:mapconjugate} illustrates the actual transformation
of the Gamma prior to the Gamma posterior. The curve inside the
plot represents the designed optimal map, constructed using
$N=1000$ \textit{i.i.d.} samples. The curve on the $y$-axis
represents the true posterior, $p(x|y)$, in
(\ref{eqn:gammaposterior}). The histogram on the $y$-axis was
generated using the posterior samples that were obtained by
transforming the prior samples on the $x$-axis through the
designed map, $S_y^*(x)= \projectionMD{W^*\Phi}(x)$. The true
posterior, $p(x|y)$, matched well with the histogram of the
posterior samples obtained using the designed map.  This
demonstrated that the proposed method constructed the desired map
accurately. We also emphasize that it is straightforward to
generate samples from the posterior distribution by
\itt{transforming} samples drawn from the prior distribution -
which is usually \itt{easy} to sample from.

\begin{figure}[t]
\centerline{\includegraphics[width=8.0cm]{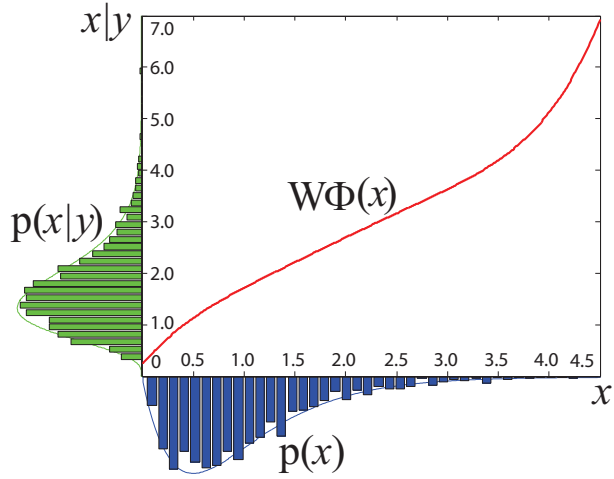}}
\caption{Illustration of the transformation of a prior, $p(x)$, to
a posterior, $p(x|y)$. The red curve represents a designed
nonlinear optimal map to push forward the prior on the $x$-axis to
the posterior on the $y$-axis.} \label{fig:mapconjugate}
\end{figure}

\subsection{Bayesian Credible Region}
\label{subsec:pointvsinterval}

In statistics, point estimates give a single value, which serves
as the \itt{best estimate} of an unknown parameter. For example,
we can calculate the mean of samples, or estimate a value to
maximize the likelihood or the posterior of the unknown parameter.
However, point estimates have several drawbacks, a major one of
which is that they do not provide any uncertainty measure of their
estimate. It is often important to know how reliable our estimate
is in many applications, and thus is desirable to calculate an
interval estimate (called Bayesian credible region), within which
we believe the unknown population parameter lies with high
probability.

The Bayesian credible region is not uniquely defined, and there
are several ways to define it: choosing the narrowest region
including the mode, choosing a central region where there exists
an equal mass in each tail, choosing a highest probability region,
all the points outside of which have a lower probability density,
etc. However, it is generally tricky to obtain these credible
regions in high dimensions, even though we can compute the
posterior distribution. In this paper, we introduce two approaches
to compute credible regions using the designed optimal map.

First, in order to compute the (1-$\alpha$) credible region of the
posterior where $0\leq \alpha \leq 1$, we obtain a region of the
prior, within which (1-$\alpha$) of the prior probability mass is
contained. We call this region the region with (1-$\alpha$)
confidence. The region of the prior with (1-$\alpha$) confidence
is generally easy to obtain; for instance, for a $d$-dimensional
Gaussian prior with zero mean and $\sigma^2$ variance, we choose
the region with (1-$\alpha$) confidence as the $d$-sphere with
$r_{\alpha}$ radius where $r_{\alpha}$ satisfies
$P(\|\mathbf{X}\|_2^2 \leq r_{\alpha}^2)=1-\alpha$. Since
$\|\mathbf{X}\|_2^2\sim \sigma^2\chi^{2}_d$ where $\chi^{2}_d$
represents the Chi-squared distribution with $d$ degrees of
freedom, we can compute $r_{\alpha}^2/\sigma^2$, above which
$\chi^{2}_d$ has its $\alpha$ probability mass. Then, it is
straightforward to compute the (1-$\alpha$) credible region of the
posterior distribution by transforming this $d$-sphere through the
designed optimal map.

To help illustrate this approach, we go through an example as
follows. Suppose that we have a binary class dataset as shown in
Fig.~\ref{fig:pointvsinterval} where a red plus sign represents
samples from one class, and a blue circle from the other. We
denote samples (or regressors) in the 2-dimensional space by
$\mathbf{y}_{i}=[y_{i,1},y_{i,2}]^{\text{T}}$, unknown parameters
of the logistic regression by $\mathbf{x}=[x_1,x_2]^{\text{T}}$,
and the class label corresponding to the $i$th sample
$\mathbf{y}_i$ by $c_i\in\{0,1\}$. Then, the logistic regression
likelihood function is given by
$p(\mathbf{y}|\mathbf{x})=\prod_{i}
p(c_i,\mathbf{y}_i|\mathbf{x})$ where a logistic regression model
is
$p(c=1,\mathbf{y}|\mathbf{x})=e^{{\mathbf{x}^{\text{T}}\mathbf{y}}}/(1+e^{{\mathbf{x}^{\text{T}}\mathbf{y}}})$
for $c=1$, and
$p(c=0,\mathbf{y}|\mathbf{x})=1-p(c=1,\mathbf{y}|\mathbf{x})$ for
$c_i=0$. We model a prior on $\mathbf{X}$ using a Gaussian
distribution with zero mean and 100 variance to regularize the
problem. Although both the prior and likelihood functions are
log-concave over $\cX$, there is no closed-form expression of the
posterior distribution; thus, we constructed an optimal map to
transform the prior to the posterior.

The big circle in Fig.~\ref{fig:credibleregion}~(a) illustrates
the region with 95 $\%$ confidence for Gaussian prior in 2-d
space. This region was obtained using the method described above.
Fig.~\ref{fig:credibleregion}~(b) illustrates the Bayesian
credible regions obtained by transforming the region with 95 $\%$
confidence of the prior in (a) through the designed optimal map.
The 95 $\%$ credible regions in Fig.~\ref{fig:pointvsinterval}~(c)
and (d) were also obtained in the same manner.

Secondly, we describe another approach to obtain Bayesian credible
regions using the \textit{i.i.d.} samples drawn from the
posterior. Once we design the optimal map, it is straightforward
to generate \textit{i.i.d.} samples drawn from the posterior by
transforming \textit{i.i.d.} samples drawn from the prior. For
example, the scatter plots in Fig.~\ref{fig:credibleregion}~(a)
and (b) show 2000 \textit{i.i.d.} samples drawn from the prior and
the posterior, respectively; The samples in (b) were generated by
transforming the samples in (a) through the optimal map. Then we
can find the credible interval for each parameter, within which
the (1-$\alpha$) portion of the samples is contained. The solid
vertical and horizontal lines in Fig.~\ref{fig:credibleregion}~(b)
represent 95 $\%$ central regions where there exist 5 $\%$ of
total samples (50 samples) in both tails.

\begin{figure}[t]
\centerline{\includegraphics[width=12cm]{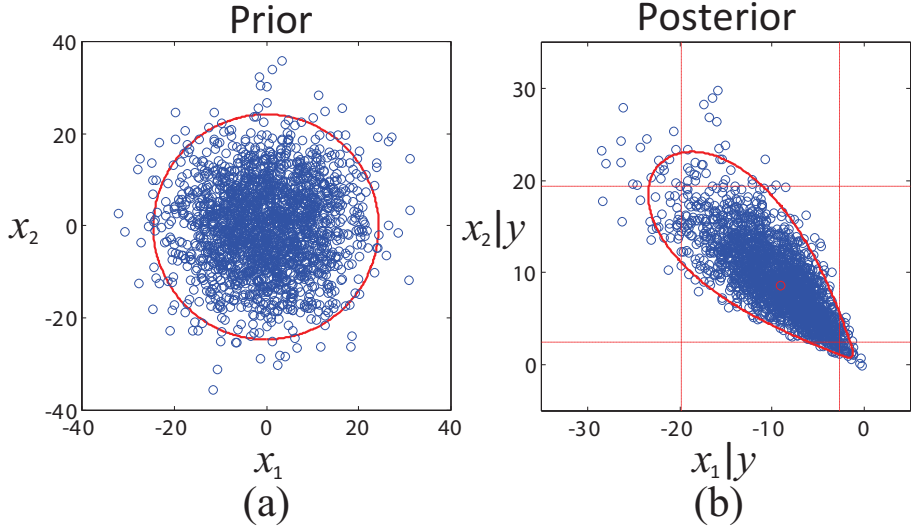}}
\caption{Transformation of confidence regions of prior to credible
regions of posterior using designed optimal map. (a) Region of
Gaussian prior with 95 $\%$ confidence is illustrated by the
circle. 2,000 \textit{i.i.d.} samples drawn from prior are
scatter-plotted together. (b) Bayesian credible regions with 95
$\%$ confidence is illustrated by fan shape. 2,000 \textit{i.i.d.}
samples drawn from posterior are scatter-plotted together. The
vertical and horizontal lines also represent the 95 $\%$ credible
region for the marginal distribution of each parameter.}
\label{fig:credibleregion}
\end{figure}

Next, using the proposed method we obtained Bayesian credible
regions of two datasets in binary classes as shown in
Fig.~\ref{fig:pointvsinterval}~(a) and (b), respectively. The
dataset in Fig.~\ref{fig:pointvsinterval}~(a) included 100 samples
from each class, and the dataset in
Fig.~\ref{fig:pointvsinterval}~(b) has 2 samples for each class.
Red plus signs represent samples belonging to one class, and blue
circles to the other. The MAP estimates of $x$ of both datasets
are same as illustrated as the dots at $(-3.9,3.9)$ in
Fig.~\ref{fig:pointvsinterval}~(c) and (d). Although the samples
in the two datasets had very different numbers and were
differently distributed, MAP estimation provided us with an
identical inference of the unknown parameters.

\begin{figure}[t]
\centerline{\includegraphics[width=10cm]{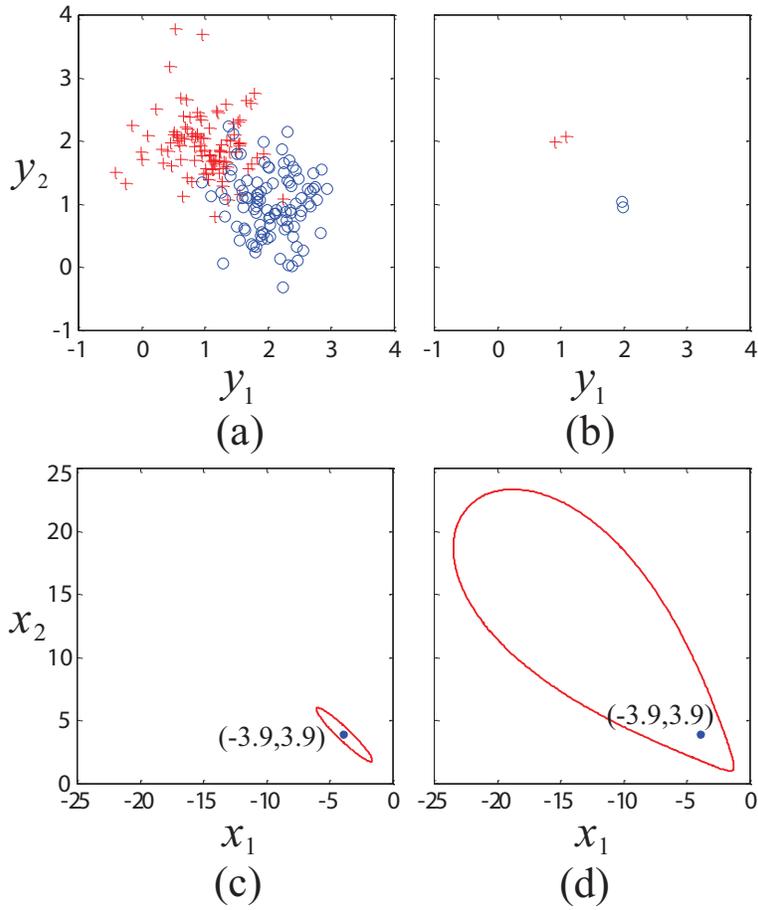}}
\caption{(a) Binary-class data with $200$ samples. (b)
Binary-class data with $4$ samples. (c) $95 \%$ Bayesian credible
region (within red line) with an MAP point estimate (blue point)
using the data in (a). (d) $95 \%$ Bayesian credible region with
an MAP point estimate using the data in (b).}
\label{fig:pointvsinterval}
\end{figure}

In addition to the point estimate, we obtained the Bayesian
credible region as a measure of confidence, as illustrated by the
red contours in Fig.~\ref{fig:pointvsinterval}~(c) and (d).
Although we obtained identical MAP estimates from both datasets,
we had very different credible regions. The dataset in
Fig.~\ref{fig:pointvsinterval}~(a) with 200 samples provided us
much smaller credible region than the dataset in
Fig.~\ref{fig:pointvsinterval}~(b) with 4 samples, meaning that we
were more confident on the estimate obtained using the dataset in
Fig.~\ref{fig:pointvsinterval}~(a). There exists more variability
in the direction of quadrant II in both datasets, and of course,
we can see much more variability for the dataset in
Fig.~\ref{fig:pointvsinterval}~(b). There is less variability in
the direction of quadrant IV, since the parameters in quadrant IV
switch the classification result.

\subsection{Bayes Risk}
\label{subsec:BayesMAPdecision} So far we have demonstrated how we
can compute the posterior probability of an unknown parameter and
use it to quantify the degree of \itt{uncertainty}. In other
situations, we perform Bayesian inference for the purpose of
taking some action. What action is taken can depend on the
estimate of some unknown parameter $x \in \cX$, our estimate of an
unknown label $c \in \mathsf{C}$ of an observation, etc. Here, we
demonstrate how an action can be improved when using the computed
posterior as compared to when using point estimates (e.g. MAP).

Suppose that we take some action $a \in \mathsf{A}$ based on an
unobserved parameter $x$ (or a label $c$) given some observation
$y \in Y$. As a general method to measure the performance of this
action, we define a loss function $l(a,x)$, (or $l(a,c)$), which
quantifies the quality of the action $a$ as it relates to the true
outcome $x$ (or $c$). It is well known that the procedure to
minimize expected loss, attaining the Bayes risk, uses the
posterior distribution as follows
\begin{eqnarray}
a^*(y)=\arg\underset{a\in \mathsf{A}}{\min}\; \int_{x} p(x|y)
l(a,x)dx.\label{eqn:postexploss}
\end{eqnarray}
Equation \eqref{eqn:postexploss} tells us how to devise an optimal
action, but it's generally not easy to solve because it's
difficult to perform computation over the posterior distribution
$p(x|y)$.  In special cases, there exist corresponding optimal
actions for particular loss functions: the MAP estimate for the
0-1 loss, the posterior mean for the squared loss, and the
posterior median for the absolute loss. These are special Bayesian
estimates for certain loss functions where point estimates can
provide us the optimal action, but for an arbitrary loss function
we need to solve the optimization problem based on
\eqref{eqn:postexploss}, which is usually challenging. We address
these challenges using our Bayesian inference method.

For binary decision problems considered in this paper, we also
used the receiver operating characteristic (ROC) curve to
visualize the performance across different balances of type-I and
type-II
errors. 

In the following subsections, we demonstrate how our Bayesian
inference method can be used to help devise an optimal action
given observations using simulation and real data.
\subsection{Simulation}
\label{subsec:simulation}

Firstly, we applied this framework to design an optimal action in
the context of sparse signal representations using simulation
data. Suppose that our observation $\mathbf{y}\in \reals^{m}$ is
given as a noisy measurement of a linear forward model. The
standard form of this problem is expressed by

\begin{eqnarray}
\mathbf{y} = \mathbf{M X} + \mathbf{e} \label{eqn:linearmix}
\end{eqnarray}
where $\mathbf{X} \in \reals^{d}$ is the vector of parameters that
are sparse, $\mathbf{M} \in \reals^{m \times d}$ is a matrix for
the linear forward model, and $\mathbf{e}\in \reals^m$ is noise.
This simple model appears in many guises: sparse signal separation
where $\mathbf{M}$ is a mixing matrix \cite{li2006underdetermined,
bell1995information}, and sparse signal representation using
overcomplete dictionaries where $\mathbf{M}$ is a basis matrix
whose columns represent a possible basis \cite{wipf2004sparse}, to
name a few. In this example, we set $m=3$ and $d=3$. To impose the
sparsity model on $\mathbf{X}$, we assumed that the parameters
$\mathbf{X}$ are endowed with a Laplace prior, $p(\mathbf{x})
\propto \exp(-||\mathbf{x}||_1/b)$, and the noise $\mathbf{e}$ is
assumed to be Gaussian with a zero mean, $\mathbf{e}\sim
N(0,\Sigma_e)$. We randomly generated the forward model
$\mathbf{M}\in\reals^{3\times 3}$ from a Gaussian distribution,
and also set the variance parameters for the prior and the
measurement noise as $b=1/\sqrt{2}$ and $\Sigma_e=0.1I$,
respectively.

Given this setting, we aimed to decide whether each component of
the true parameter $\mathbf{x}$ was greater than a preset
threshold $\tau>0$ or not based on the computed posterior and the
MAP estimate, respectively. To achieve this, we designed a loss
function as $l(\mathbf{a},\mathbf{x}) = \sum_{i=j}^{3}
l(a_{j},x_{j})$ where $l(a_{j},x_{j})=1$ was 1 if we made an
incorrect decision; that is, $a_{j}=1$ and $|x_{j}|<\tau$, or
$a_{j}=0$ and $|x_{j}|>\tau$. Otherwise, if the decision was
correct, the loss function was zero.  Here, $\tau$ was chosen to
contain 95 $\%$ of the prior density's mass.

For each simulation, we randomly generated a new $\mathbf{M}$,
$\mathbf{X}$, and $\mathbf{e}$. To devise a Bayesian optimal
decision, we first approximated the expected loss in
\eqref{eqn:postexploss} as \eqref{eqn:conditionalExpectation},
using the posterior samples $Z_i$ drawn from $p(x|y)$ using the
designed optimal map. Then we found $\mathbf{a}$ to minimize the
computed expected loss for each simulation. For MAP decision, we
simply performed \beqa a^*_{MAP} = \argmin_{a=\{0,1\}}
l(a,\mathbf{x}_{\text{MAP}}). \label{eqn:defn:optimalDecision:MAP}
\eeqa Thus the MAP decision was made as $a^*_{MAP}=1$ if
$|x_{\text{MAP}}|>\tau$ and $a^*_{MAP}=0$ otherwise. Then we
computed losses of both decisions by comparing them to the true
$x$ that was used to generate the observation $y$. We repeated
these simulations 200 times. The Bayes decision rule made no
incorrect decisions for all 200 simulations, but the MAP decision
incurred the losses 0, 1, 2, and 3 for 112, 71, 15, and 2
simulations, respectively.

\begin{figure}[t]
\centerline{\includegraphics[width=14cm]{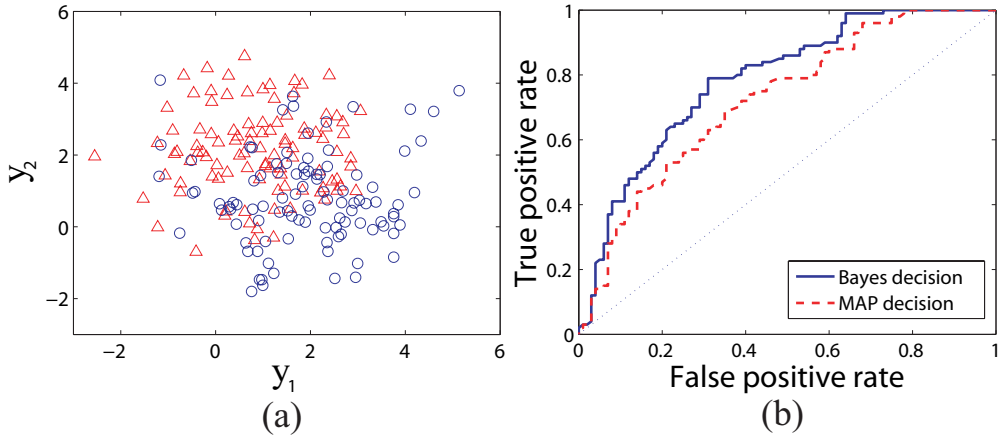}}
\caption{Performance comparison between Bayes and MAP decisions.
(a) Scatter plot of 100 input variables for test. (b) ROC curves
for Bayes and MAP decisions for Bayesian logistic regression
problem with Gaussian prior.} \label{fig:BayesDecisionSim}
\end{figure}

Secondly, we applied this framework to the context of logistic
regression using the simulation data $y$ in binary classes as
described in the third column of Table~\ref{tab:summary}. Logistic
regression has been widely used in many applications, since it is
usually simple and fast to fit a model to data and easy to
interpret the resulting model. In many applications, it is useful
to compute the posterior distribution of the parameters for
logistic regression model. However, there is no conjugate prior,
so posterior calculations can be challenging. Although the
posterior is often approximated as Gaussian, it can deviate much
from the true posterior, due to asymmetry  as shown in
Fig.~\ref{fig:pointvsinterval}~(d). In this example, we instead
computed the posterior distribution for logistic regression using
our proposed Bayesian method.

We demonstrated that the computed posterior helped make a better
decision than an MAP point estimate using samples for a binary
decision problem. We first computed the conditional probability of
a label given observation, $p(c|y)$, and then obtained the ROC
curves by comparing the computed $p(c|y)$ to varying decision
thresholds. The conditional probability is given as
$p(c|y)=\int_{x}p(c|x,y)p(x|y)dx$ where $p(c|x,y)$ is in the form
of sigmoid function. For Bayesian decision, $p(c|y)$ is
approximated by $1/n\sum_{i=1}^{n}p(c|Z_i,y)$ using $Z_i$ drawn
from $p(x|y)$. For the MAP decision, $p(c|y)$ is approximated by
$p(c|x_{\text{MAP}},y)$. We used 10 samples (5 in each class) to
compute the posterior and its MAP estimate. Then we tested Bayes
and MAP decisions using new 100 samples (50 samples in each class)
generated from the same distribution, which were plotted in
Fig.~\ref{fig:BayesDecisionSim}~(a). The Gaussian prior with zero
mean and unit variance was used. The ROC curve in
Fig.~\ref{fig:BayesDecisionSim}~(b) illustrates that the Bayes
decision showed a better decision performance than the MAP
decision.

\subsection{Real data}
\label{subsec:real}

Here, we applied the proposed Bayesian inference framework to real
data sets such as EEG recordings for sleep study
\cite{carskadon2000monitoring} and physiological measurements from
ICU patients.

Firstly, we demonstrate how we estimated the Fourier magnitude
representation of EEG recordings for sleep scoring from a Bayesian
perspective, and then how we used this representation to improve a
sleep monitoring system. Existing methods for sleep scoring
analyzed the relationship of the activity of frequency bands with
sleep stages \cite{kemp2000analysis}. So they relied on the power
spectrum estimate of EEG recording, but largely ignored how
reliable this estimate was. From a statistical viewpoint, the
Fourier representation of a signal can be interpreted as the
maximum likelihood (ML) estimate under a Gaussian noise with zero
mean, and tends to produce many small weights across all
frequencies. However, a sleep EEG signal has special
time-frequency structure, i.e., its power spectrum tends to
concentrate on a certain band or sub-bands depending on sleep
stages, so the standard Fourier transform based approach for sleep
monitoring may not always be an optimal representation for sleep
staging. To address this issue, we put a sparse prior on the
Fourier magnitude spectrum.  By applying an appropriate prior on
the average magnitude spectrum, the spectral analysis in the
Bayesian perspective provides us not only a better representation
of the power spectrum, but also an additional information on our
uncertainty on these estimates.  This allows for an opportunity to
design a better automatic sleep scoring system.

For the Bayesian spectrum analysis we used
PhysioNet\cite{PhysioNet} sleep EEG recording which provides the
recordings together with their hypnograms (a graph that represents
the sleep stages as a function of time) that sleep experts
annotated, and we used these hypnograms as the ground truth of the
decision test. We divided the full-band EEG signal into 8 sub-band
ones to characterize the sleep EEG signal in terms of the power in
each sub-band. A set to include all the frequency components in
the $i$th sub-band was denoted as $\mathbb{F}_i$ for
$i=1,2,...,8$. The sets, $\mathbb{F}_1$, $\mathbb{F}_2$, and
$\mathbb{F}_3$ covered delta (0.5-4 Hz), theta (4-8 Hz), and alpha
(8-12 Hz) bands, $\mathbb{F}_4$ and $\mathbb{F}_5$ covered beta
(12-35 Hz) together, and $\mathbb{F}_6$, $\mathbb{F}_7$, and
$\mathbb{F}_8$ covered the high frequency band (35-50 Hz),
respectively. The sampling frequency was 100 Hz.

The problem settings were described in Table~\ref{tab:summary}. We
then computed the posterior distribution of the \itt{averaged
magnitudes} in each sub-band. Suppose that $y_k$ represented the
$k$th Fourier component of noisy EEG recording with an additive
Gaussian noise with zero mean and $\sigma^2$ variance, and $x_k$
represented the Fourier magnitude of the original EEG signal
before the noise was added. That is, $y_k$ was a complex number
and $x_k$ was a non-negative real number. At the $k$th frequency
bin, the likelihood function for the magnitude spectrum, $x_k$,
was given by the Gaussian distribution with $|y_k|$ mean and
$\sigma^2$ variance, unless the noise was too high
\cite{mcaulay1980speech}. As a selection of prior density, we used
an exponential distribution to impose both sparsity and
non-negativity on average spectrum magnitudes. The input was all
the Fourier components included in all the 8 sub-bands denoted as
$\mathbf{y}$, and the output was the vector of averaged magnitudes
in the 8 sub-bands denoted as
$\mathbf{x}=[x_1,\cdots,x_{8}]^{\text{T}}$. Assuming independence
across the frequency components, the posterior distribution of
$\mathbf{x}$ given $\mathbf{y}$ is expressed as

\begin{eqnarray}
p(\mathbf{x}|\mathbf{y}) \propto
\prod_{i=1}^{8}\prod_{k\in\mathbb{F}_i}\exp\left\{
-\frac{(\abs{y_k}-x_i)^2}{2\sigma^2} \right\}\exp\left\{ -\gamma
x_i\right\}\label{eqn:posteriorsleep2}
\end{eqnarray}
for $x_i\geq 0$. A closed-form representation of the posterior in
(\ref{eqn:posteriorsleep2}) does not exist; we applied our
approach to estimate relevant posterior quantities.
Fig.~\ref{fig:sleepanalysis}~(a) illustrates examples of the
conditional expectation of
$\mathbb{E}[\mathbf{X}|\mathbf{Y}=\mathbf{y}]$, with its 95 $\%$
credible interval for 4 different types of sleep stages.

\begin{figure}[t]
\centerline{\includegraphics[width=14cm]{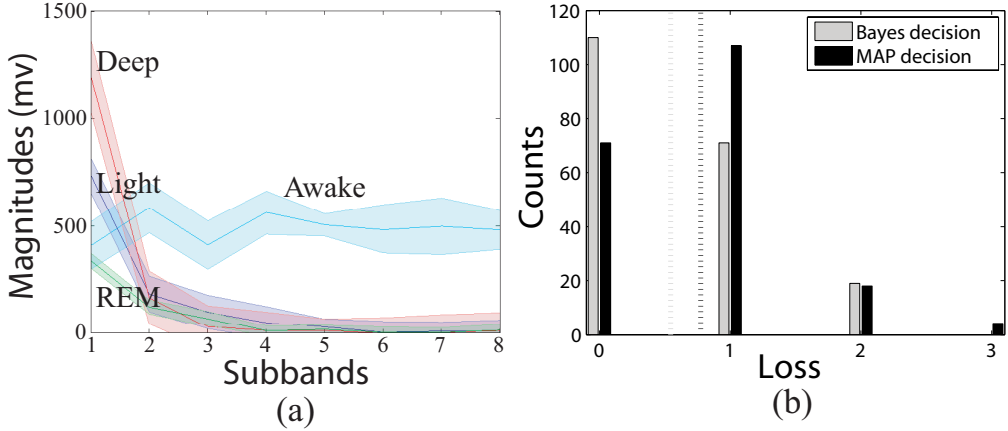}}
\caption{Sleep EEG data analysis. (a) Posterior of the magnitudes
in sub-bands. Each plot represents the posterior mean of the
magnitudes in the sub-bands with their credible intervals.
Different colors represent different sleep stages (cyan: awake,
blue: light, red: deep, and green: REM). (b) Histograms of losses
of Bayes and MAP decisions for sleep stage monitoring. Vertical
dotted lines represent the averaged losses.}
\label{fig:sleepanalysis}
\end{figure}

We next used these posteriors for making decision rules for
automatic sleep scoring. Suppose that $c$ represents one of 4
sleep stages that we need to determine. Our goal is to design an
optimal decision rule to minimize the expected loss in terms of
the posterior distribution of $c$ given $\mathbf{y}$, which is
given by
$p(c|\mathbf{y})=\int_{\mathbf{x}}p(c|\mathbf{x})p(\mathbf{x}|\mathbf{y})d\mathbf{x}$.
Based on the characteristics of each sleep stage described in
\cite{ronzhina2012sleep}, we built a simple model for
$p(c|\mathbf{x})$ as in the Table~\ref{tab:pcx}. We denoted awake,
light, deep, and rapid-eye-movement (REM) sleep stages as W, L, D,
and R for brevity, respectively. The loss function was 3 between
$W$ and $D$, 2 between $W$ and $L$; $W$ and $R$; 1 between others.

\begin{table}
\caption{A rule to design $p(c|\mathbf{x})$. For brevity, awake,
light, deep, and REM sleep stages were denoted by W, L, D, and R,
respectively.} \centerline{
\begin{tabular}{|l|c|c|c|c|}
  \hline
  Conditions of $\mathbf{x}$ & $p(\text{W}|\mathbf{x})$ & $p(\text{L}|\mathbf{x})$ & $p(\text{D}|\mathbf{x})$ & $p(\text{R}|\mathbf{x})$ \\
  \hline
  $>35$ Hz takes $>5\%$ of total.  & 0.8 & 0.1 & 0.05 & 0.05 \\
  \hline
  Total is $<1000$. & 0.1 & 0.2 & 0.05 & 0.65 \\
  \hline
  Delta takes $<60\%$ of total. & 0.1 & 0.6 & 0.2 & 0.1 \\
  \hline
  Delta takes $\geq 60\%$ of total. & 0.05 & 0.3 & 0.6 & 0.05 \\
  \hline
\end{tabular}}\label{tab:pcx}
\end{table}

To evaluate our method, we computed the posterior distributions
for 200 non-overlapping sliding windows. The sleep stages that the
sleep experts manually annotated were provided for every 30
seconds of the EEG recordings \cite{van1990alternative}. We used
the first 5 second of data in each window to make the problem more
challenging. Then we designed an optimal action $a$ given
observation $\mathbf{y}$ for each window to minimize the expected
loss in terms of the posterior distribution.
Fig.~\ref{fig:sleepanalysis}~(b) illustrates the histograms of the
losses for Bayes and MAP decisions for 200 temporal windows. As
illustrated, Bayes decision rule incurred smaller losses than MAP
decision rule.

Next, we applied our framework to develop a risk-prediction system
to predict the survival rates or measure the severity of disease
of ICU patients based on physiological measurements. The
development of this system is helpful for clinical decision
making, standardizing research, comparing the efficacy of
medication or the quality of patient care across ICUs. We used
real physiological measurements of ICU patients together with
their survival outcomes in PhysioNet \cite{PhysioNet}. Since the
outcome $c$ was in binary-class, we designed the optimal action
$a$ to take for prediction based on the Bayesian logistic
regression that we discussed in the previous subsection. The
problem settings were described in the third column of
Table~\ref{tab:summary}.

Using the ICU data, we computed ROC curves for the Bayes and MAP
decisions. Firstly, 10 physiological measurements such as blood
urea nitrogen, Galsgow coma score, heart rate, urine output, etc.,
for 20 subjects were provided with their survival outcomes; For
more details of the physiological measurements, refer to PhysioNet
\cite{PhysioNet}. The survival/non-survival outcome of patients
$c$ was assigned values 1 or 0. Fig.~\ref{fig:ICUanalysis}~(a)
illustrates the scatter plot of two input variables among 10
variables for 100 subjects as an example: blood urea nitrogen and
Galsgow coma score. Fig.~\ref{fig:ICUanalysis}~(a) shows
significant class overlap for these two features (other feature
pairs show similar overlap), suggesting a challenging
classification task.

Given measurements $\mathbf{y}$ and labels $c$ for the 20
subjects, we computed the posterior $p(\mathbf{x}|\mathbf{y})$ of
the unknown parameter $\mathbf{x}$ of the logistic regression
model and estimated $\hat{\mathbf{x}}_{\text{MAP}}$ to maximize
$p(\mathbf{x}|\mathbf{y})$. Then, physiological measurements for
another 100 subjects were provided without their labels. We then
computed the ROC curves in the same manner described in the
Simulation case.

Fig.~\ref{fig:ICUanalysis}~(b) illustrates the ROC curves for the
Bayes and MAP decision rules obtained using 100 subjects. As
shown, the Bayes decision provided us a significantly improved
prediction performance over the MAP decision.

\begin{figure}[t]
\centerline{\includegraphics[width=14cm]{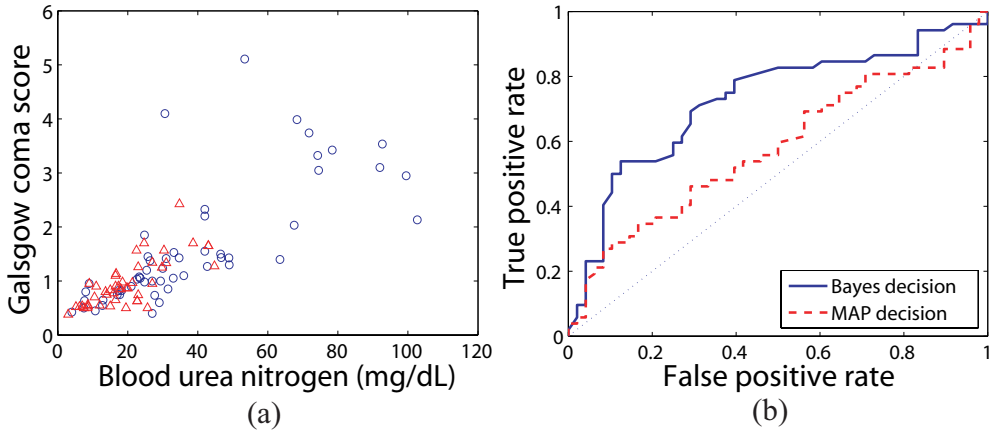}}
\caption{ICU measurements analysis. (a) Scatter plot of two input
variables among 10 variables, blood urea nitrogen and Galsgow coma
score, for 100 subjects. The blue circle and red plus sign
represent survival and non-survival outcomes, respectively. (b)
ROC curves for Bayes and MAP decision for predicting survival in
ICU.} \label{fig:ICUanalysis}
\end{figure}


\section{Discussion}
\label{sec:discussion} We have proposed an efficient Bayesian
inference method based on finding an optimal map which transforms
samples from the prior distribution to samples from the posterior
distribution. Although El Moselhy et al. \cite{el2012bayesian}
proposed the original optimal maps perspective, their formulation
-- in terms of minimizing a variance -- is in general non-convex
and thus computationally intractable.  In this setting, we
considered an alternative approach, based upon KL divergence
minimization, that returns the same optimal solutions. We have
also shown consistency results when using finite-dimensional
approximations that can be implemented computationally. We have
shown that for the class of log-concave priors and likelihoods,
this results in a finite-dimensional convex optimization problem.
We emphasize that the class of log-concave distributions is quite
large and widely used in various applications
\cite{bagnoli2005log}, and that this is the same convexity
condition required for Bayesian point (MAP) estimation. As such,
we have shown that from the perspective of convexity, we can ``get
something for nothing'' by going from point estimation to fully
Bayesian estimation.  Through the optimal map, we demonstrated the
ability to perform computations, with multi-dimensional
parameters, involving the full posterior, including: constructing
Bayesian credible regions, attaining the Bayes risk, drawing
\textit{i.i.d.} samples from the posterior, and generating ROC
curves.

Other applications outside of Bayesian inference might be able to
benefit from this approach. In Section~\ref{sec:gen-push-thrm}, we
demonstrated a more general result about transforming samples from
$P$ to $Q$ whenever $Q$ is log-concave and $P$ can be easily
sampled from. Outside the scope of Bayesian inference (where $P$
is the prior and $Q$ is the posterior), this ability may have
applications including but are not limited to data compression
\cite{cover2006elements}, and message-point feedback information
theory \cite{ma2011generalizing}.

Although we have established convexity of these schemes, further
work can be done in developing parallelized optimization
algorithms that modern large-scale machine architectures routinely
use for Bayesian point estimation. Characterizing the fundamental
limits of sample complexity of this approach of Bayesian inference
help guide how these architectures may possibly be soundly
implemented.  Optimizing architectures for hardware optimization,
and understanding performance-energy-complexity tradeoffs, will
further allow for wider exploration of these methods within the
context of emerging applications, such as wearables
\cite{kim2011epidermal,kang2015scalable} and the
internet-of-things \cite{atzori2010internet}.

\bibliographystyle{IEEEtran}
\bibliography{BayesianInference}

\end{document}